\ifthreedvfinal\pagestyle{empty}\fi
\newtheorem{theorem}{Theorem}[section]
\newtheorem{proposition}[theorem]{Proposition}
\newtheorem{definition}[theorem]{Definition}
\newtheorem{example}[theorem]{Example}
\newenvironment{proof}[1][Proof]{\begin{trivlist}
\item[\hskip \labelsep {\bfseries #1}]}{\end{trivlist}}
\newcommand{\xmark}{\ding{55}}%
\newcommand{\noo}{\textcolor{red}{\xmark}}
\newcommand{\yes}{\textcolor{OliveGreen}{\checkmark}}
\newcommand{\norm}[1]{\left\lVert#1\right\rVert}
\newcommand{\qed}{\nobreak \ifvmode \relax \else
      \ifdim\lastskip<1.5em \hskip-\lastskip
      \hskip1.5em plus0em minus0.5em \fi \nobreak
      \vrule height0.75em width0.5em depth0.25em\fi}
\newcommand{\algrule}[1][.2pt]{\par\vskip.5\baselineskip\hrule height #1\par\vskip.5\baselineskip}
\def\thickhline{%
  \noalign{\ifnum0=`}\fi\hrule \@height \thickarrayrulewidth \futurelet
   \reserved@a\@xthickhline}
\def\@xthickhline{\ifx\reserved@a\thickhline
               \vskip\doublerulesep
               \vskip-\thickarrayrulewidth
             \fi
      \ifnum0=`{\fi}}
\def\thickhline{%
  \noalign{\ifnum0=`}\fi\hrule \@height \thickarrayrulewidth \futurelet
   \reserved@a\@xthickhline}
\def\@xthickhline{\ifx\reserved@a\thickhline
               \vskip\doublerulesep
               \vskip-\thickarrayrulewidth
             \fi
      \ifnum0=`{\fi}}
\newlength{\thickarrayrulewidth}
\DeclarePairedDelimiter\abs{\lvert}{\rvert}%
\let\oldabs\abs
\def\abs{\@ifstar{\oldabs}{\oldabs*}}
\renewcommand*\env@matrix[1][*\c@MaxMatrixCols c]{%
  \hskip -\arraycolsep
  \let\@ifnextchar\new@ifnextchar
  \array{#1}}
\newlist{steps}{enumerate}{2}
\setlist[steps, 1]{label=(\arabic*) , font=\bfseries,  wide=0pt}%
\setlist[steps, 2]{label=\emph{alph*}),  wide=0pt, before=\leavevmode, topsep=0pt}%
\newlist{modellist}{enumerate}{2}
\setlist[modellist, 1]{label=Model~(\arabic*) -- , font=\bfseries,  wide=0pt}%
\setlist[modellist, 2]{label=\emph{alph*}) --,  wide=0pt, before=\leavevmode, topsep=0pt}%
\setlist[enumerate]{itemsep=0mm}
\newcolumntype{R}[2]{%
    >{\adjustbox{angle=#1,lap=\width-(#2)}\bgroup}%
    l%
    <{\egroup}%
  }
\newcommand{\boldparagraph}[1]{\vspace{0.05em}\noindent{\bf #1} }
\newcommand{\Mmat}{\mathsf{M\mathstrut}}
\begin{document}

\title{Unsupervised Monocular Depth Reconstruction of Non-Rigid Scenes}
\author{\hspace{-1.3em}
Ay\c{c}a Takmaz$^2$\!,
Danda Pani Paudel$^1$\!,
Thomas Probst$^1$\!,
Ajad Chhatkuli$^1$\!,
Martin R. Oswald$^{2,3}$\!,
Luc Van Gool$^{1,4}$\\
$^1$Computer Vision Lab, ETH Zurich \qquad
$^2$Department of Computer Science, ETH Zurich\\
$^3$University of Amsterdam, Netherlands \qquad $^4$VISICS, ESAT/PSI, KU Leuven, Belgium\\ \hspace{-1.4em}
{\tt\small takmaza@ethz.ch}, 
{\tt\small \{paudel,probstt,ajad.chhatkuli,vangool\}@vision.ee.ethz.ch},
{\tt\small moswald@inf.ethz.ch}
}

\maketitle
\thispagestyle{empty}

\begin{abstract}
Monocular depth reconstruction of complex and dynamic scenes is a highly challenging problem.
While for rigid scenes learning-based methods have been offering promising results even in unsupervised cases, there exists little to no literature addressing the same for dynamic and deformable scenes. In this work, we present an unsupervised monocular framework for dense depth estimation of dynamic scenes, which jointly reconstructs rigid and non-rigid parts without explicitly modelling the camera motion. Using dense correspondences, we derive a training objective that aims to opportunistically preserve pairwise distances between reconstructed 3D points.
In this process, the dense depth map is learned implicitly using the as-rigid-as-possible hypothesis. Our method provides promising results, demonstrating its capability of reconstructing 3D from challenging videos of non-rigid scenes. Furthermore, the proposed method also provides unsupervised motion segmentation results as an auxiliary output.

\end{abstract} %

\vspace{-2mm}
\vspace{-3mm}
\section{Introduction}\label{sec:intro}
Understanding the 3D structure of a scene can provide important cues for many tasks such as robot navigation \cite{robotnav}, motion capture \cite{mocap}, scene understanding~\cite{kim20133d}, and augmented reality \cite{augmreality}. While humans are exceptionally capable of inferring the non-rigid 3D structures from an image, geometric computer vision techniques either require a large amount of labeled data or are capable of learning only from rigid scenes~\cite{zhou}. However, in the real world, scenes often consist of non-rigid and dynamic elements. Thus, it is quite natural to seek for the ability of inferring the depth from an image of any given scene, regardless of whether it is highly dynamic or not. We refer to Fig.~\ref{fig:teaser} for some examples. %

The ability of humans to understand their environment geometrically and semantically is mainly acquired through childhood learning.
In an attempt to emulate this ability, learning-based strategies have been applied to the problem of \textit{depth from single-view}. 
In fact, many learning-based methods already offer very promising progress in this direction. Among them, supervised methods ~\cite{liu, wild-nips, bastian} aim to reconstruct the depth of rigid and non-rigid parts during training, whereas many unsupervised methods~\cite{rescue, zhou, geonet, digging, yuhua, wicked, directmethods, yuille, consistent, packnet} mainly employ a training strategy which aims to reconstruct the rigid parts of a scene. %
Although a recent unsupervised method ~\cite{google-depth} also reconstructs objects translating on a ground plane, it has limitations in terms of modelling highly non-rigid scenes. 
Other unsupervised methods for non-rigid reconstruction exploit object-specific priors~\cite{ranjan2018generating,wu2020unsupervised,Yang_2021_CVPR}, or reconstruct sparse~\cite{novotny2019c3dpo} or dense~\cite{Sidhu2020, Sridhar2019} points of a single non-rigid object. Such methods, however, do not have the same applicability or motivation as that of single view scene depth estimation. Unlike the monocular setup, the calibrated stereo (or multi-camera) methods for learning single view depth~\cite{left-right,mehta2018structured} do not suffer from ambiguities due to non-rigidity and can handle complete scene depth. However, they have significant limitations, particularly when acquiring training data via calibrated stereo pairs is not feasible. 
\begin{figure}[t]
  \centering
    \includegraphics[width=\columnwidth]{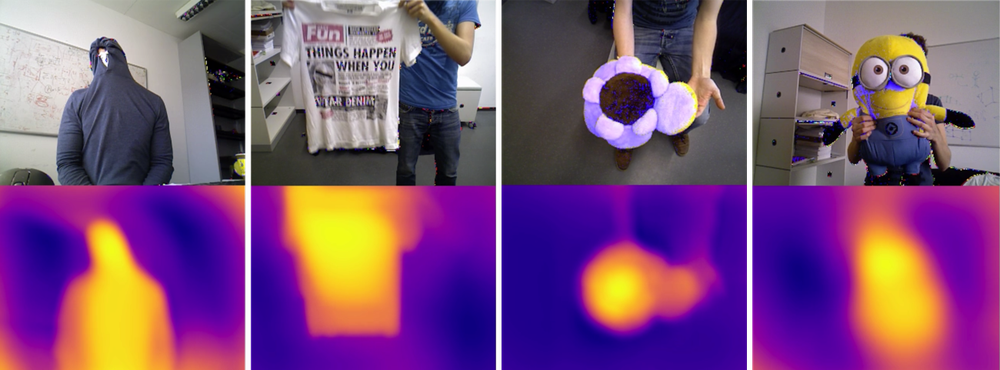}
    \vspace{-16pt}
    \caption{\textbf{Unsupervised Depth.} Image-depth pairs; depth is estimated in an unsupervised manner using the proposed method.
    \label{fig:teaser}} %
\end{figure}

In this work, we are interested in learning to predict dense depth from a single image using an unsupervised monocular pipeline. 
More importantly, we would like to reconstruct the depth of all parts of the scene during the training, irrespective of the rigidity/non-rigidity of these parts. 
Our unsupervised setup assumes that only calibrated monocular videos~\footnote{Our method can potentially also be used for multi-view setups.} with known intrinsics are available during training. 
Such an assumption is realistic as well as crucial for a wide variety of setups, ranging from consumer to surgical cameras, where depth or stereo acquisition for supervision is often impractical. 
In this context, learning depth from monocular videos of non-rigid scenes remains an unresolved problem. 
This is no surprise, given the challenges, \eg, ill-posedness, ambiguities, inconsistent priors. %

The success of the unsupervised depth learning methods for rigid scenes can be primarily attributed to the advancements in deep learning and the rigid reconstruction constraints used in such methods. 
This motivates us to explore the non-rigid 3D reconstruction literature employing various assumptions. Our goal is to keep in mind an overview of the literature and to use the gained insights to build our unsupervised monocular pipeline for depth reconstruction in non-rigid scenes.  
Our main \textbf{contributions} are threefold:
\begin{itemize}[itemsep=-2pt,topsep=3pt,leftmargin=*]
\item We reformulate the Non-Rigid Structure-from-Motion (NRSfM) priors in a novel unified framework using the Euclidean distance matrix measures across views. This contribution is summarized in Table~\ref{tab:related_Work_overview}.
\item  The utility of the proposed framework is also demonstrated for unsupervised non-rigid monocular depth, by exploiting the as-rigid-as possible (ARAP) prior during CNN training. For the implementation of the ARAP prior, we define and employ a concept of \textit{motion embeddings}. This contribution is illustrated in Fig.~\ref{fig:unsupervisedEmbeddingModel-met}.
\item Through experiments, we provide interesting new insights towards learning non-rigid scene depths in an unsupervised manner, detailed in our discussion section. 
\end{itemize}

\section{Non-Rigid Reconstruction Revisited}\label{sec:revisit}
Since generic non-rigid 3D reconstruction from a monocular camera is an ill-posed problem, methods in the literature rely on some priors or assumptions about the scene.
The most common scene priors can be broadly divided into the following four categories.

\boldparagraph{Low-Rank (LR).} 
The low-rank prior assumes that non-rigid 3D structures can be expressed as a linear combination of finite basis shapes. 
The landmark work of~\emph{Costeira \& Kanade} ~\cite{costeira1998multibody} developed for orthographic cameras (or slight variations) uses the LR prior for multi-body 3D scene reconstruction. Since, it been widely used in various cases for the 3D reconstruction of single~\cite{bregler2000recovering,brand2001flexible,bartoli2008coarse,olsson2009convex,dai2012simple,fragkiadaki2014grouping,khan2014non,valmadre2015closed,agudo2018image,novotny2019c3dpo,kong2019} and multiple ~\cite{xiao2005uncalibrated,akhter2010trajectory} non-rigid objects.

\boldparagraph{Scene Motion (SM).} Several notable works of \emph{Shashua et al.}~\cite{shashua2000homography,avidan2000trajectory,wolf2002projection,vogel20153d} have shown that the known planar/linear motion prior can be exploited to reconstruct the 3D structure of dynamic scenes\footnote{The recent work~\cite{google-depth} can be seen as an adaptation of~\cite{avidan2000trajectory}.}. Work of \emph{Ozden et al.}~ \cite{ozden2004} also tackled dynamic scene 3D reconstruction using the principle of non-accidentalness.
An insightful work of \emph{Hartley and Vidal}~\cite{hartley2008perspective} reveals that the method of~\cite{wolf2002projection} can be indeed extended to the generic non-rigid case, using the low-rank structure and without an explicit motion prior, with only one \emph{severe} ambiguity, which stems from the fact that the recovered camera motion is relative to the moving points. This in turn implies the unfavorable news that the low-rank prior alone is not sufficient to recover scale-consistent non-rigid 3D structures from monocular projective cameras, when the objects are independently moving in the multi-body setting.

\boldparagraph{Isometric Deformation (ID).} To avoid the algebraic prior of LR, \emph{Salzmann et al.}~\cite{salzmann2010linear} and \emph{Perriollat et al.}~\cite{perriollat2011monocular} %
introduced the geometric `ID' prior of an object deforming isometrically, which makes non-rigid surface reconstruction possible when used with a known template. Since, the ID prior of the object has been used in many other works~\cite{ shen2009monocular,vicente2012soft,chhatkuli2014non,parashar2016isometric, chhatkuli2016inextensible,wang2016template,probst2018incremental,li2010multi}. \emph{Taylor et al.}~\cite{taylor2010} introduced local-rigidity for non-rigid reconstruction, which is a form of the ID prior. Other related geometric priors have also been exploited in the literature~\cite{malti2013monocular,russell2014video,agudo2014good}. When objects undergo severe deformations, relying on a geometric prior for the objects and avoiding any explicit camera motion estimation have demonstrated state-of-the art results for non-rigid object reconstruction~\cite{parashar2020local,parashar2020robust,probst2018incremental,gallardo2020shape,malti2017elastic}. The benefit of not estimating the camera motion explicitly can be understood quite intuitively, as the estimated rigid camera motion is merely a motion with respect to the scene. %
Interested readers may recall the relevant work in~\cite{hartley2008perspective}.
In this regard, the works of \emph{Li}~\cite{li2010multi}, \emph{Ji et al.}~\cite{maxrig} and \emph{Chhatkuli et al.}~\cite{chhatkuli2016inextensible} stand out. 
While reconstructing inextensible structures, \cite{chhatkuli2016inextensible} formulates an explicit-motion-free problem, which turns out to be equivalent to the rigid case formulation of~\cite{li2010multi} in the absence of deformations. 
In fact, the formulation of~\cite{chhatkuli2016inextensible} is shown to be effective for both rigid and non-rigid scenes. All three works \cite{chhatkuli2016inextensible,maxrig,li2010multi} are based on the pairwise distance equality in 3D, while \cite{li2010multi} uses the same pairwise sampling as \cite{maxrig}.

\boldparagraph{As-Rigid-As-Possible (ARAP).} Non-rigid shape modelling using the ARAP assumption is very prevalent in computer graphics~\cite{alexa2000rigid,igarashi2005rigid,sorkine2007rigid}. 
The ARAP assumption maximizes rigidity while penalising stretching, shearing, and compression. ARAP concept has also been used in many works~\cite{parashar2015rigid,kumar2019dense} for monocular non-rigid 3D reconstruction. In particular, two works of~\emph{Parashar et al.}~\cite{parashar2015rigid} and \emph{Kumar et al.}~\cite{kumar2019dense} are noteworthy. Using the shape template of an object, ~\cite{parashar2015rigid} reconstructs the deformed volumetric 3D under the ARAP assumption. On the other hand,~\cite{kumar2019dense} demonstrates the practicality of ARAP in depth densification and refinement for non-rigid \emph{scenes}, using multi-view setups. More interestingly to us, the ARAP prior is sufficient to resolve the scale ambiguity between freely moving parts, \ie the issue previously discussed by  \emph{Hartley \& Vidal} in~\cite{hartley2008perspective}.  %

In summary, the computation of camera motion becomes ambiguous in complex non-rigid scenes for a monocular camera. Nevertheless, the structure of both rigid scenes and non-rigid objects\footnote{Given that the assumed object's prior is sufficient.} can still be recovered without any explicit camera motion estimation. On the other hand, the scale consistent reconstruction intrinsically requires some additional prior, such as the ARAP assumption used in this work. 
\section{Preliminaries}
\boldparagraph{Notations.}
We denote matrices with uppercase and their elements with double-indexed lowercase letters: $\mathsf{A}=(a_{ij})$. Similarly, we write vectors and index them as: $\mathsf{a}=(a_{i})$. The inequality $\mathsf{A} > 0$  refers to $(a_{ij}>0)$, unless mentioned otherwise.
We use special uppercase Latin or Greek letters for sets and graphs, such as $\mathcal{S}$ and $\mathcal{G}$. 
The lowercase Latin letters, as in $a$, are used for scalars. The set of neighbours of $i$ within a radius $r$ is given by $\mathcal{N}_r(i)$. %
\begin{figure}[tb]
  \centering
    \includegraphics[width=0.9\columnwidth,trim={0 150 0 110},clip]{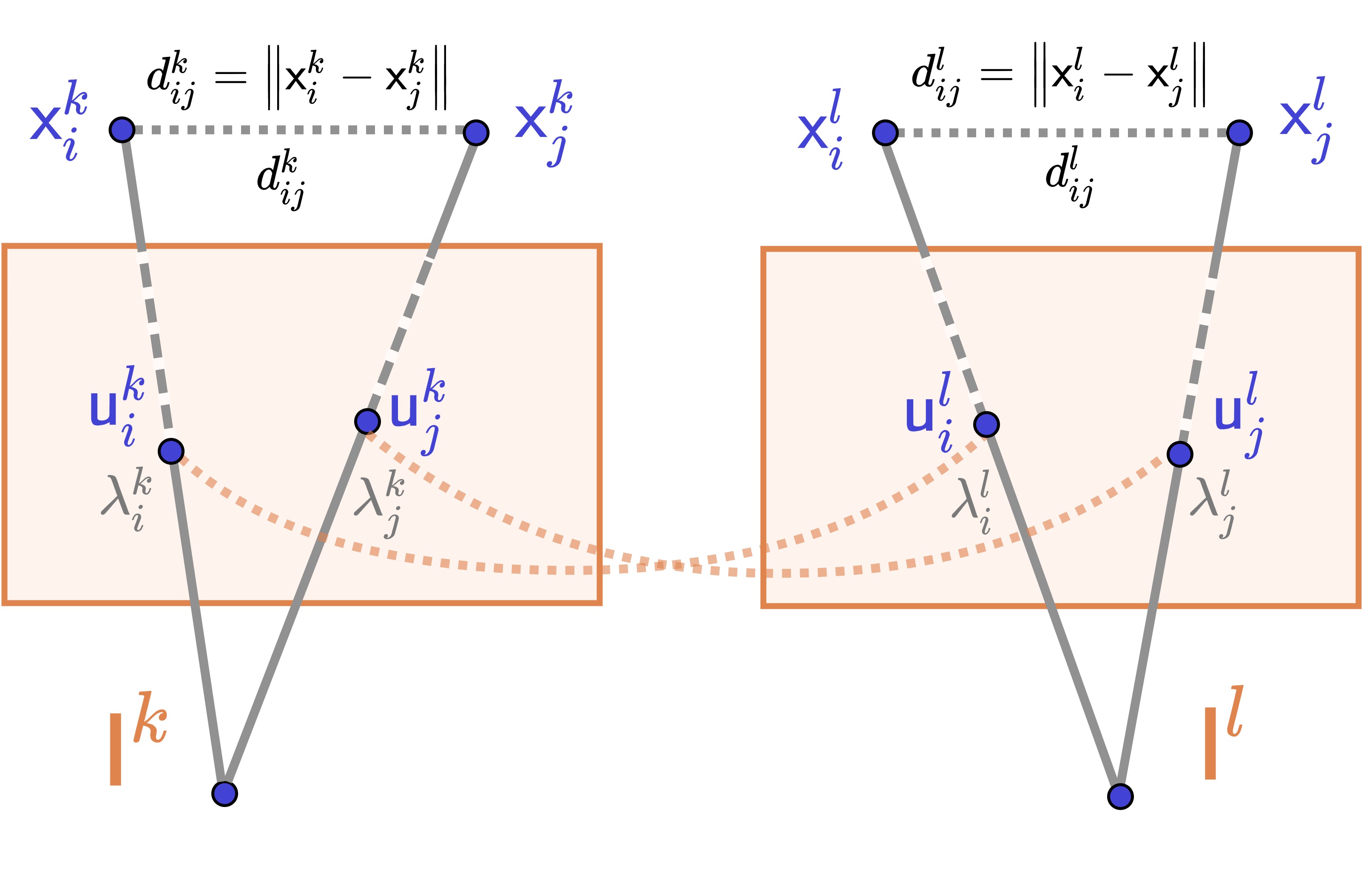}
    \vspace{-1em}
    \caption{\textbf{Pairwise distance formulation.} The pair of 3D Euclidean distances in $k$-th and $l$-th views are given by   ${d_{ij}^k=\norm{\mathsf{x}_i^k-\mathsf{x}_j^k}}$ and ${d_{ij}^l=\norm{\mathsf{x}_{i}^l-\mathsf{x}_{j}^l}}$, respectively.   
    Every 3D point, say  $\mathsf{x}_i^k = \lambda_i^k\mathsf{u}_i^k$, is expressed using the unknown depth and known homogeneous image coordinates, \ie $\lambda_i^k$ and  $\mathsf{u}_i^k$.
    \label{fig:notations1}}
\end{figure}
    
\boldparagraph{Non-rigid Structure-from-Motion (NRSfM).} Our problem formulation is camera motion independent, similar\footnote{Similar also in mathematical terms.} to~\cite{li2010multi,salzmann2010linear,chhatkuli2016inextensible}.
We pose the NRSfM problem as finding point-wise depth in each view. We use superscript $k$ to denote the $k$-th image (among $m$) and subscript $i$ to denote the $i$-th point (among $n$). As illustrated in Fig.~\ref{fig:notations1}, we represent the unknown depth as $\lambda_i^k$ and the known homogeneous image coordinates as $\mathsf{u}_i^k$.  The set of points in the $k$-th view is given as $\mathcal{X}^k = \{X_i^k| \forall~i~\in~1\dots n\}$. The Euclidean distance between point $X_i^k$ and point $X_j^k$ is denoted as $d_{ij}^k$. In our formulation, we use the Euclidean distance matrix:
\begin{definition}[Euclidean distance matrix] Euclidean Distance Matrix (EDM), say $\mathsf{E}\in\mathbb{R}^{n\times n}$, is a matrix representing the spacing of $n$ points in $3$-dimensional Euclidean space, say ${\mathsf{X}=[\mathsf{x}_1,\mathsf{x}_2,\ldots,\mathsf{x}_n]\in\mathbb{R}^{3\times n}}$, and the entries of $\mathsf{E}$ are given by, $e_{ij}=d_{ij}^2 =\norm{\mathsf{x}_i-\mathsf{x}_j}^2$.
\end{definition}
Let $\mathsf{\Lambda^k}\!=\![\lambda_1^k, \lambda_2^k, \ldots, \lambda_n^k] \!\in\!\mathbb{R}^{n}$,  be the sought depth of the $k$-th view. 
We represent the 3D structure in the form, 
${\mathsf{X}^k(\mathsf{\Lambda^k})=[\lambda_1^k\mathsf{u}_1^k,\lambda_2^k\mathsf{u}_2^k,\ldots,\lambda_n^k\mathsf{u}_n^k]\in\mathbb{R}^{3\times n}}$. Let us define the Gram matrix $\mathsf{G}(\mathsf{\Lambda^k})=(\mathsf{X}^k(\mathsf{\Lambda^k}))^\intercal\mathsf{X}^k(\mathsf{\Lambda^k})\in\mathbb{R}^{n\times n}$.  
The EDM for the $k$-th view is then given by:
\begin{equation}
    \mathsf{E}(\mathsf{\Lambda}^k) = \text{diag}(\mathsf{G}(\mathsf{\Lambda}^k))\mathbf{1}^\intercal 
    -2\mathsf{G}(\mathsf{\Lambda}^k)
    +\mathbf{1}\text{diag}(\mathsf{G}(\mathsf{\Lambda}^k))^\intercal.
\end{equation}
NRSfM aims to estimate $\{\mathsf{\Lambda}^k\}$ by imposing priors on $\mathsf{E}(\mathsf{\Lambda}^k)$. For clarity, we present an example of the rigid case. 
\begin{example}[The rigid structure prior] For noise- and outlier-free rigid scenes, the estimated $\{\mathsf{\Lambda}^k, \mathsf{\Lambda}^l\}$ must satisfy, $\mathsf{E}(\mathsf{\Lambda}^k)-\mathsf{E}(\mathsf{\Lambda}^l)=0,$ for all $k,l=1,\ldots, m$. Intuitively, the distance between any pair of points must be preserved across views. Method of~\cite{li2010multi} is a variant of this example. %
\label{ex:rigidExample}
\end{example}
The prior of the above example is also referred as the \emph{global rigidity} constraint~\cite{whiteley2004rigidity,eren2002closing}. Weaker than the \emph{global rigidity} is \emph{local rigidity}. 
Interested readers can find a thorough study of local/global rigidity in~\cite{li2010multi}. 
Here, we are interested in the non-rigid isometric deformation prior~\cite{perriollat2011monocular,taylor2010,salzmann2010linear,chhatkuli2016inextensible}. 
Let us introduce a weight matrix $\mathsf{W}\in\mathbb{R}^{n\times n}$, whose entries are $w_{ij}=1$ if $j\in\mathcal{N}_r(i)$ for some radius $r$, and $w_{ij}=0$ otherwise. Then the relaxed isometric prior of ~\cite{salzmann2010linear,chhatkuli2016inextensible,probst2018incremental} aims to reconstruct the depth by solving,
\begin{align}
\label{eq:iso-nrsfm}
\begin{split}
& \text{find}\quad \{\mathsf{\Lambda}^k\}, \\
& \text{s.t.}\quad  \mathsf{W}\odot\big(\mathsf{E}(\mathsf{\Lambda}^k)-\mathsf{E}(\mathsf{\Lambda}^l)\big)=0, \ \forall k, l,\\
& \quad \quad \, \mathsf{\Lambda}^k > 0, \ \forall k \enspace.
\end{split}
\end{align}
where $\odot$ represents the Hadamard-product. Intuitively,~\eqref{eq:iso-nrsfm} aims to preserve the local Euclidean distances across views. The constraints $\mathsf{\Lambda}^k > 0$ ensure positive depths in all views. 
It is important to note that for sufficiently large radii $r$ the problem of~\eqref{eq:iso-nrsfm} is equivalent to that of the Example~\ref{ex:rigidExample}.
\section{This Work} %
One may use the formulation in Eq.~\eqref{eq:iso-nrsfm} to learn to reconstruct both \emph{rigid} scenes or non-rigid \emph{objects}, for known $\mathsf{W}$\footnote{For example,~\cite{chhatkuli2016inextensible} uses 2D neighbors to construct $\mathsf{W}$.}. We however, are  interested in reconstructing \emph{complex scenes} consisting of both, which requires additional priors (recall Section~\ref{sec:revisit}).
Moreover, it is unclear how to obtain the weight matrix $\mathsf{W}$ in general cases. We use the ARAP assumption to address the scale ambiguity problem. In this process, the weight matrix is also learned along with the depth.
If one aims to exploit other non-rigid priors, later in Section~\ref{subSec:previousWorks} we provide a thorough analysis in that direction.

\subsection{Problem Formulation}
We aim to estimate the depth $\Lambda$ for each view, and the weight matrix $\mathsf{W}$ for given view-pairs.
Our weights $w_{ij}\in[0,1]$ can be interpreted as rigidity scores, between points $X_i$ and $X_j$.
Since, the concept of rigidity is meaningful only for two (or more) views, our rigidity scores are computed accordingly.
We consider two points to be rigidly connected, if their distance does not change across views.
If one seeks for pair-wise rigidity on generic graphs, several interpretations can be derived based on the graph  connectivity~\cite{whiteley2004rigidity,eren2002closing,li2010multi}.
In the context of this paper, we formulate the ARAP assumption as follows: 
 \begin{definition}[{As-rigid-as-possible}]
 For a point set under deformation, the as-rigid-as-possible model assumes that every pair-wise distance of the fully connected graph between points, respects at least some degree of rigidity.
 \end{definition}
 We can now formalize our problem statement as follows.
\begin{align}
\label{eq:deep-iso-nrsfm}
\begin{split}
& \min_{\{\mathsf{\Lambda}^k\}, \{\mathsf{W}^{kl}\}}\quad \eta, \\
& \text{s.t.}\quad  |\mathsf{W}^{kl}\odot\big(\mathsf{E}(\mathsf{\Lambda}^k)-\mathsf{E}(\mathsf{\Lambda}^l)\big)|\leq \eta\norm{\mathsf{W}^{kl}}_{1,1}, \\
& \quad \quad \,1\geq\mathsf{W}^{kl}\geq \tau,\, \, \mathsf{\Lambda}^k > 0, \ \forall k.
\end{split}
\end{align}
Here, the positive scalars $\tau$ and $\eta$ are the rigidity threshold and the rigidity adjusted maximum allowed distance error, respectively.   
Very often the point pairs from non-rigid objects respect rigidity. 
This is when many priors are best justified. 
In general, local rigidity does not imply global rigidity. 
For large fully connected graphs though, pair-wise rigidity means global rigidity. 
We relax this constraint by allowing different edges to have different rigidity scores.
We like to draw the reader's attention on two key (and somewhat related) aspects of our formulation: \textbf{(1)} image pair-wise rigidity scores; and \textbf{(2)} global connectivity.      
 
Our motivation for using individual $\mathsf{W}^{kl}$ stems from the following observation: across all frame pairs, most of the edges respect rigidity at least once in the global sense (e.g. stopping objects or periodic motions).
This global rigidity can be captured and propagated in the rest of the reconstruction for scale consistency, if the global connectivity is established using the fully connected graph. %
For local reconstruction, we can rely on local connectivity which often better respects rigidity.
Note that the maximum allowed adjusted distance error, \ie $\eta$, is measured in the normalized (by the weight matrix) form. 
This encourages rigid edges to bear higher weights, and vice versa.
Such distribution of weights prioritizes the pairs holding rigidity to reconstruct rigidly. 
On the other hand, non-rigid pairs -- if reconstructed correctly at their rigid instant (with highest weight) -- can still satisfy the imposed constraints, as long as the extended distance error does not exceed $\eta/\tau$.

\subsection{Relation to Previous Works}  \label{subSec:previousWorks}

\begin{table}[tb]
  \centering
  \scriptsize
  
  \newcommand{\RotText}[1]{ \rotatebox[origin=l]{90}{\scriptsize #1 } }

  \newcommand{\bldist}{\hspace{8pt}}%
  \setlength{\tabcolsep}{0.1pt}
  \begin{tabular}{l@{\bldist}ccccccc@{\bldist}l}
  	\toprule
  	Method & \RotText{Unsup.-Monoc.} & \RotText{Non-Rigid} &
  	\RotText{Rigid prior} & \RotText{ARAP} & \RotText{Isom. deform.} & \RotText{Scene motion} & \RotText{Low rank} & Constraints\\
  	\midrule
  	\emph{Li}~\cite{li2010multi} 
  	    &  &  & \yes & & & & & $w_{ij}^{kl}=1$  \\[0.8pt] \hdashline \noalign{\vskip 1pt}
  	\emph{Parashar~\etal}~\cite{parashar2015rigid} 
  	    &  & \yes &  & \yes & & & & \multirow{2}{*}{$1\geq w_{ij}^{kl}\geq t \geq 0$}  \\
  	\emph{Kumar~\etal}~\cite{kumar2019dense}
  	    &  & \yes & \yes & \yes & & & & \\[0.8pt] \hdashline \noalign{\vskip 1pt}
  	\emph{Salzmann~\& Fua}~\cite{salzmann2010linear}
  	    &  & \yes &  & & \yes & & & \multirow{4}{*}{\makecell[l]{$w_{ij}^{kl}=1$, if $j\in\mathcal{N}_r(i)$,\\ $w_{ij}^{kl}=0$, otherwise}} \\
  	\emph{Taylor~\etal}~\cite{taylor2010}  
  	    &  & \yes & \yes & & \yes & & & \\
  	\emph{Chhatkuli~\etal}~\cite{chhatkuli2016inextensible} 
  	    &  & \yes &  & & \yes & & & \\
  	\emph{Probst~\etal}~\cite{probst2018incremental} 
  	    &  & \yes &  & & \yes & & & \\[0.8pt] \hdashline \noalign{\vskip 1pt}
  	\emph{Shashua \& Wolf}~\cite{shashua2000homography} 
  	    &  & \yes & \yes & & & \yes & & \multirow{4}{*}{\makecell[l]{$w_{ij}^{kl}=s_e^{kl}$, if $e\in\mathcal{S}^{kl}$,\\  $w_{ij}^{kl}=0$ , otherwise}} \\
  	\emph{Avidan \& Shashua}~\cite{avidan2000trajectory} 
  	    &  &  & \yes & & & \yes & & \\
  	\emph{Vogel~\etal}~\cite{vogel20153d} 
  	    &  &  & \yes & & & \yes & & \\
  	\emph{Li~\etal}~\cite{google-depth} 
  	    & \yes & \yes & \yes & & & \yes & & \\[0.8pt] \hdashline \noalign{\vskip 1pt}
  	\emph{Costeira~\etal}~\cite{costeira1998multibody} 
  	    &  &  & \yes & & & & \yes & \multirow{4}{*}{\makecell[l]{$\mathsf{W}^{kl}=1$, $ \text{rank}(\mathbf{D})\!\leq\!\!\frac{(b+1)(b+2)}{2}$,  \\  ${\text{rank}(\mathsf{D}^{kl})\leq \text{min}(8,b+2)}$}}  \\
    \emph{Wolf \& Shashua}~\cite{wolf2002projection} 
        &  &  & * & & & & \yes & \\
    \emph{Hartley \& Vidal}~\cite{hartley2008perspective} 
        &  &  & * & & & & \yes & \\
    \emph{Agudo~\etal}~\cite{agudo2018image} 
        &  &  & * & & & & \yes & \\[0.8pt] \hdashline \noalign{\vskip 1pt}
    \emph{Wu~\etal}~\cite{wu2020unsupervised} 
        & \yes & \yes &  & & & & & \\[0.8pt] \hdashline \noalign{\vskip 1pt}    
    Ours
        & \yes & \yes & \yes & \yes & \yes & \yes & \yes &  \\
  	\bottomrule
  \end{tabular}
  \vspace{-6pt}
  \caption{\textbf{Our formulation in relation to the existing works.}
All five priors, including the rigid prior, can be used to train our method, given the suitability of priors on the structures of interest. Indices $\{i,j,k,l\}$ iterate through all, unless mentioned otherwise. 
The asterisk(*) indicates inexact or incomplete use of a prior.  
  \label{tab:related_Work_overview}}
\end{table}
Heretofore, we have discussed the relationship of our formulation in Eq.~\eqref{eq:deep-iso-nrsfm} to rigidity, ID, and ARAP priors. 
Now, we further explore the same with respect to LR and SM, for the completeness of our theoretical framework.
To do so, let us denote $\mathsf{D}^{kl} = \mathsf{E}(\mathsf{\Lambda}^k)-\mathsf{E}(\mathsf{\Lambda}^l)$ and its stacked version ${\mathbf{D} = [(\mathsf{D}^{12})^\intercal,\ldots,(\mathsf{D}^{kl})^\intercal,\ldots,(\mathsf{D}^{(m-1)m})^\intercal]}$.
Our investigation leads to the following relationships between the formulation used in this paper and the prior of the low-rank purely on structures, without explicit camera motion.
\begin{proposition}[EDM of Low-rank structures]
For 3D structures that can be represented using $b$ linear basis, the constraints ${\text{rank}(\mathsf{D}^{kl})\leq \text{min}(8,b+2)}$ for all $k,l$ and $\text{rank}(\mathbf{D})\leq \frac{(b+1)(b+2)}{2}$ are necessary
to recover the low-rank structures, using Euclidean distance matrices.  
\end{proposition}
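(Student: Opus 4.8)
The plan is to make the ``$b$ linear basis'' hypothesis concrete through a shared low-rank factorization of the stacked structure and then read the two rank bounds directly off the algebra of the EDM differences. Writing the stacked world-frame shape $\mathsf{S}=[(\mathsf{X}^1)\trans,\ldots,(\mathsf{X}^m)\trans]\trans$ as a rank-$b$ product, each point acquires a view-independent latent coordinate $\mathsf{v}_i\in\R^{b}$ and each view a linear map $\mathsf{U}_k\in\R^{3\times b}$ with $\mathsf{x}_i^k=\mathsf{U}_k\mathsf{v}_i$. First I would record the resulting form of a single EDM entry, $e_{ij}^k=(\mathsf{v}_i-\mathsf{v}_j)\trans\mathsf{M}^k(\mathsf{v}_i-\mathsf{v}_j)$ with $\mathsf{M}^k=\mathsf{U}_k\trans\mathsf{U}_k$ symmetric of rank at most $3$, so that the difference entry is $d_{ij}^{kl}=(\mathsf{v}_i-\mathsf{v}_j)\trans\Delta^{kl}(\mathsf{v}_i-\mathsf{v}_j)$ with $\Delta^{kl}=\mathsf{M}^k-\mathsf{M}^l$ symmetric and, by subadditivity of rank, of rank at most $\min(6,b)$.

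For the per-pair bound I would expand this quadratic form exactly as in the EDM identity of the preliminaries: collecting the cross term gives $\mathsf{D}^{kl}=\mathsf{g}\,\mathbf{1}\trans-2\,\mathsf{V}\Delta^{kl}\mathsf{V}\trans+\mathbf{1}\,\mathsf{g}\trans$, where $\mathsf{V}\in\R^{n\times b}$ stacks the $\mathsf{v}_i$ as rows and $\mathsf{g}=\mathrm{diag}(\mathsf{V}\Delta^{kl}\mathsf{V}\trans)$. The middle term has rank at most $\min(\mathrm{rank}\,\Delta^{kl},\mathrm{rank}\,\mathsf{V})\le\min(6,b)$ and the two outer products contribute at most $2$, so $\mathrm{rank}(\mathsf{D}^{kl})\le\min(6,b)+2=\min(8,b+2)$, which is the first claim.

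For the stacked bound I would argue that the columns of every block $\mathsf{D}^{kl}$ lie in one fixed subspace. From the decomposition above, each column of $\mathsf{D}^{kl}$ is a combination of $\mathbf{1}$, the $b$ columns of $\mathsf{V}$ (since $\mathrm{col}(\mathsf{V}\Delta^{kl}\mathsf{V}\trans)\subseteq\mathrm{col}(\mathsf{V})$), and the diagonal vector $\mathsf{g}$, whose $i$-th entry $\mathsf{v}_i\trans\Delta^{kl}\mathsf{v}_i=\sum_{s\le t}(\Delta^{kl})_{st}\,v_{is}v_{it}$ is a linear combination of the Hadamard products $\mathsf{v}_{\cdot s}\odot\mathsf{v}_{\cdot t}$ of the columns of $\mathsf{V}$. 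Hence $\mathrm{col}(\mathbf{D})$ is contained in the span of $\mathbf{1}$, the $b$ columns $\mathsf{v}_{\cdot s}$, and the $\binom{b+1}{2}$ products $\mathsf{v}_{\cdot s}\odot\mathsf{v}_{\cdot t}$ with $s\le t$ — equivalently, the evaluations at the $n$ points of all monomials of degree at most two in the latent coordinates. Counting these monomials yields $\dim\le 1+b+\tfrac{b(b+1)}{2}=\tfrac{(b+1)(b+2)}{2}$, proving the second claim; both inequalities are then exactly the constraints any low-rank reconstruction must satisfy.

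The step I expect to be the main obstacle is fixing the correct reading of ``$b$ linear basis'': the bound is $b+2$ (not $3b+2$) precisely when $b$ is the rank of the stacked structure, i.e. the dimension of the shared latent coordinates $\mathsf{v}_i$, rather than the number of full $3\times n$ basis shapes. Getting this accounting right is what makes the rank-at-most-$6$ estimate for $\Delta^{kl}$ interact correctly with the latent dimension in $\min(6,b)$. The second, more conceptual subtlety concerns $\mathbf{D}$: naively summing per-block ranks diverges, and the bound only emerges once the common column space is recognized as a degree-$\le 2$ (Veronese-type) monomial span and its dimension is counted there.
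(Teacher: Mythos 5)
Your proposal is correct and follows essentially the same route as the paper's proof: the same low-rank factorization $\mathsf{x}_i^k=\mathsf{U}_k\mathsf{v}_i$, the same three-term split of $\mathsf{D}^{kl}$ into two rank-one outer products plus the Gram difference $-2\,\mathsf{V}\Delta^{kl}\mathsf{V}^{\!\intercal}$ giving $\min(6,b)+2$, and the same count $1+b+\tfrac{b(b+1)}{2}$ for the stacked matrix. The only (cosmetic) difference is in the quadratic part of the stacked bound: the paper counts the $\tfrac{b(b+1)}{2}$ degrees of freedom of the symmetric matrices $\tilde{\mathsf{M}}_{k,l}=\mathsf{M}_k^{\!\intercal}\mathsf{M}_k-\mathsf{M}_l^{\!\intercal}\mathsf{M}_l$, whereas you count the dual object, the degree-two monomials $\mathsf{v}_{\cdot s}\odot\mathsf{v}_{\cdot t}$ spanning the diagonal vectors.
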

\begin{proof}
\vspace{-3pt}
The proof is provided in the suppl. material.\hfill \qed 
\vspace{-3pt}
\end{proof}

\begin{figure*}[!htb]
  \centering
  \vspace{-1.5em}
  \includegraphics[width=1\linewidth]{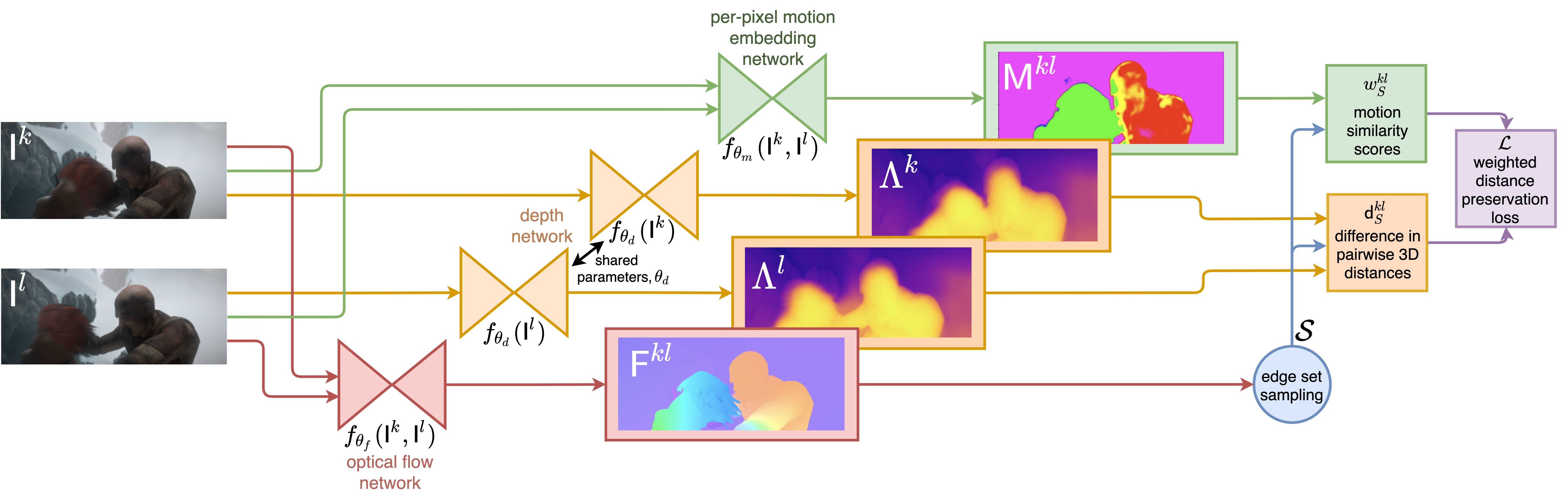}
  \vspace{-2em}
  \caption{\textbf{Unsupervised learning of depth reconstruction and motion embeddings.} Our model consists of a depth and a per-pixel motion embedding networks, which jointly learn depth map, and motion embeddings whose similarity expresses the rigidity between scene point pairs. A pre-trained flow network used for dense correspondences is included for completeness. The training objective is given in Eq.~\eqref{eq:loss-iso-nrsfm}. After training, only a single-pass through the depth network is required for the inference of depth from an image.
      \vspace{-1em}}
  \label{fig:unsupervisedEmbeddingModel-met} 
\end{figure*}

SM prior can also be integrated in our formulation by enforcing rigidity scores.
For example, rigid objects with linear constant velocity maintain highest rigidity with respect to parallel lines/planes\footnote{Which can be represented using a set of points on them.} along the velocity direction. 
In fact, the exact rigidity scores to the other points can be derived, if the velocity is also known. 
This is particularly interesting, if the motion prior and image semantics are known, \eg a rigid car driving on the rigid road with stops, or a rigid surgical tool poking an isometric organ surface. 
We represent a set of edges whose rigidity scores can be measured (or bounded) in some or all image pairs, as $\mathcal{S}^{kl}=\{s_{e}^{kl}| e\in \{e_{ij}^{kl}\}\}$. 
These scores then can also be used (if necessary with bounds) in our formulation.   
Needless to say, the SM prior can be used in conjunction with LR in practice~\cite{wolf2002projection,hartley2008perspective}. 
An overview of commonly used priors as discussed in Section~\ref{sec:revisit} in relation to previous works is given in Table~\ref{tab:related_Work_overview}, and presented in a unified framework. Our method can be used for all priors unified in Table~\ref{tab:related_Work_overview}. However, different priors are useful depending on the scene.

Having provided this unified theoretical framework, we  proceed with the experimental analysis of our proposed methodology. In this work, we are interested in reconstructing the depth of \textit{all} parts of a scene, and in that direction we choose to focus on the \textit{ARAP} prior which is sufficient to resolve the scale ambiguity between freely moving parts, in contrast to priors such as the LR in multi-body settings. That being said, the other priors we unify in Table~\ref{tab:related_Work_overview} can also be used within our framework to reconstruct scenes/objects operating under different priors, by computing rigidity scores with respect to the corresponding prior and using these scores in our formulation. 
 \section{Learning Depth Using ARAP}\label{sec:learningdepth}

We first use problem~\eqref{eq:deep-iso-nrsfm} to learn depth reconstruction from image data, given the dense correspondences between views $k$ and $l$.
We will then formalize our learning objective for a given image pair $\mathsf{I}^k$ and $\mathsf{I}^l$ of views $k$ and $l$.
Our optimization objective can be extended to multi-view images $\mathcal{I} = \{\mathsf{I}^1, \mathsf{I}^2, \cdots, \mathsf{I}^m\}$ of the same scene. 
An overview of our learning pipeline is shown in Fig.~\ref{fig:unsupervisedEmbeddingModel-met}.

\boldparagraph{Objective and overview.}
Using a deep convolutional neural network  ${\mathbf{f}_{\theta_d}(\mathcal{I}): \mathbb{R}^{H \times B \times 3} \xrightarrow{} [0,\mathbb{R}^+]^{H \times B}}$ parameterized by $\theta_d$, we wish to estimate the depth for a given RGB image of size $H \times B \times 3$, as the output of the network.

In order to train the depth network, we use the ARAP prior. For that purpose, we predict depths for the views $k$ and $l$ as: $\Lambda^k = \mathbf{f}_{\theta_d}(\mathsf{I}^k)$ and $\Lambda^l = \mathbf{f}_{\theta_d}(\mathsf{I}^l)$, respectively. In order to use the ARAP objective \eqref{eq:deep-iso-nrsfm}, we establish correspondences between views $k$ and $l$ with a pre-trained optical flow network ${\mathbf{f}_{\theta_f}(\mathcal{I} \times \mathcal{I})}$, whose weights are frozen. We use the dense correspondences for the view pair $(k,l)$ obtained from $\mathbf{f}_{\theta_f}(\mathsf{I}^k,\mathsf{I}^k)$ to compute the difference of EDMs given by $\mathsf{E}(\mathsf{\Lambda}^k)-\mathsf{E}(\mathsf{\Lambda}^l)$. Next we describe how we obtain the rigidity scores $\mathsf{W}^{kl}$ for the view pair $(k,l)$.

\boldparagraph{Motion embeddings.}
Revisiting the initial optimization problem given in~\eqref{eq:deep-iso-nrsfm}, we aim to estimate the weight matrix $\mathsf{W}^{kl}$, whose entries $w_{ij}^{kl} \in [0,1]$ represent the \textit{rigidity} between points $(X_i^k, X_j^l)$, across the transformation between the views $(k \leftrightarrow{} l)$. %
To derive this weight matrix $\mathsf{W}^{kl}$, we propose to learn \textit{per-pixel motion embeddings}, whose similarity indicates the sought rigidity between points. 
In particular, we seek for a dense and complete map of motion embeddings, given a pair of views $(k, l)$. To this end, we make use of another neural network, ${\mathbf{f}_{\theta_m}(\mathcal{I}\times \mathcal{I}): \mathbb{R}^{({H \times B \times 3})\times({H \times B \times 3})} \xrightarrow{} [0,1]^{H \times B\times v}}$ parameterized by $\theta_m$, to learn the motion embeddings:

\begin{definition}[Motion embedding matrix] A motion embedding matrix, say $\mathsf{M}^{kl}\in\mathbb{R}^{v \times n}$, is a matrix representing $n$ points in $v$-dimensional Euclidean space, in which ${\mathsf{M}^{kl}=[\mathsf{m}_1^{kl},\mathsf{m}_2^{kl},\ldots,\mathsf{m}_n^{kl}]\in\mathbb{R}^{v\times n}}$.  The entry $\mathsf{m}_i^{kl}$ represents a \textbf{motion embedding vector} for each point $X_i^k$ in the $k$-th view, representing the motion that the point undergoes across the given view pair $(k, l)$.
\label{def:motion_embeddings}
\end{definition}

\noindent
Having obtained the motion embedding matrix $\mathsf{M}^{kl}$, we derive the entries of $\mathsf{W}^{kl}$ from the pairwise distances between the motion embedding vectors $\mathsf{m}_i^{kl}$ and $\mathsf{m}_j^{kl}$. It should be noted that we expect a higher weight when the distance between the motion embedding vectors are smaller, and vice versa. With this consideration, we formulate the motion similarity scores for every pair (as illustrated in Fig.~\ref{fig:score_calculation}) as,
\begin{equation}\label{eq:motion_similarity_scores}
    w_{ij}^{kl} = 1 - tanh(\norm{\mathsf{m}_i^{kl} - \mathsf{m}_j^{kl}}).
\end{equation}

\begin{figure}[t]
  \centering
  \vspace{-10pt}
  \includegraphics[width=0.95\columnwidth,clip]{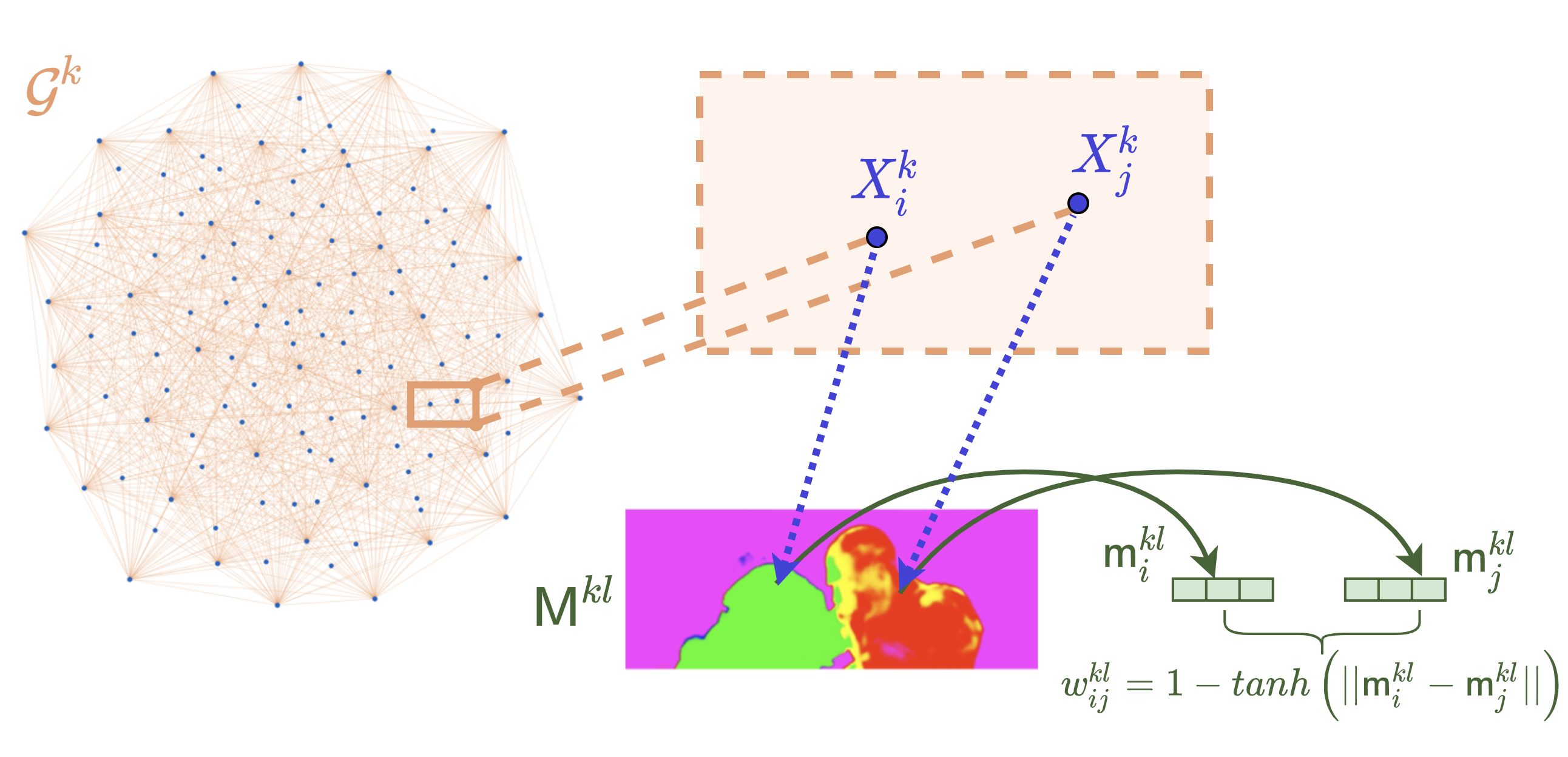}
  \vspace{-1.5em}
  \caption{\textbf{Motion similarity scores.} For a given point pair $(X_i^{k}, X_j^{k})$  from the scene graph $\mathcal{G}^k$, we retrieve the associated motion embeddings $\mathsf{m}_{i}^{kl}$ and $\mathsf{m}_{j}^{kl}$, from which we derive motion similarity scores. These scores express how much we expect the 3D distance to be preserved for each point pair.}
   \label{fig:score_calculation}
\end{figure}

\boldparagraph{Edge Sampling.}
We aim to compute the \emph{motion similarity scores}, $w_{ij}^{kl}$, ideally for all possible point pair combinations. 
However, performing this computation is not tractable considering the number of pixels.
Therefore, from the graph for the $k$-th view, say $\mathcal{G}^k = (V=\mathcal{X}^k, E=\mathcal{X}^k \times \mathcal{X}^k)$, we uniformly sample a random subset of all edges (point pairs) for the loss computation. The sampled edges are denoted as $\mathcal{S}^k$. Using this subset $\mathcal{S}^k$, we compute a weight matrix $\mathsf{W}_\mathcal{S}^{kl}$, whose entries are given by %
\begin{equation}
 w_{\mathcal{S},ij}^{kl} = \left\{
\begin{array}{ll}
      w_{ij}^{kl} & (X_{i}^k, X_{j}^{k}) \in \mathcal{S}^k, \\
      0 & otherwise. \\
\end{array} 
\right. 
 \label{eq:sparse_matrix}
\end{equation}

\boldparagraph{Loss Formulation.}
Using  ~\eqref{eq:motion_similarity_scores} and ~\eqref{eq:sparse_matrix}, we reformulate the problem of~\eqref{eq:deep-iso-nrsfm} as a loss function to train our networks, parameterized by $\theta = \{\theta_d,\theta_m\}$, as follows:
\begin{equation}
\mathcal{L}_\theta(\mathsf{\Lambda}^k, \mathsf{\Lambda}^l, \mathsf{W}_\mathcal{S}^{kl}) =
 \frac{\norm{\mathsf{W}_\mathcal{S}^{kl}\odot\big(\mathsf{E}(\mathsf{\Lambda}^k)-\mathsf{E}(\mathsf{\Lambda}^l)\big)}_{1,1}}
      {\alpha\norm{\mathsf{W}_\mathcal{S}^{kl}}_{1,1}} \;.
 \label{eq:loss-iso-nrsfm}
\end{equation}
The normalization factor $\alpha=\norm{\mathsf{E}(\mathsf{\Lambda}^k)+\mathsf{E}(\mathsf{\Lambda}^l)}_{1,1}$ is introduced for numerical stability and to avoid reconstructions with near-zero depth values.
The constraints of~\eqref{eq:deep-iso-nrsfm} are imposed in the network output: Depths $\mathsf{\Lambda}^k$ are ensured to be positive by using a sigmoid on inverse depth output from the network. Similarly, the weights $\mathsf{W}^{kl}$ are bounded by using~\eqref{eq:loss-iso-nrsfm}, followed by a $\tau$-offset and clamping of the weight values to $[0,1]$ in progression (to enforce the constraints of~\eqref{eq:deep-iso-nrsfm}). Computation of \eqref{eq:loss-iso-nrsfm} can be efficiently performed by exploiting the sparsity of $\mathsf{W}_{\mathcal{S}}^{kl}$. During the calculation, we only iterate through the indices with non-zero entries of $\mathsf{W}_{\mathcal{S}}^{kl}$. We additionally incorporated a weight-norm regularization term in our training objective, i.e. $\beta\norm{\mathsf{W}_\mathcal{S}^{kl}}_{1,1}$, to control the maximization of the weights. Further implementation details can be found in the supplementary. 
We summarize our loss computation process in Algorithm~\ref{algo:lossComputation}.

\boldparagraph{Network structure.}
Our depth network $\mathbf{f}_{\theta_d}(\mathcal{I})$ consists of a ResNet-18 based encoder and a decoder. The input to our depth network is a single RGB image, and the output from the network is a single depth map of the same size. Our per-pixel motion embedding network $\mathbf{f}_{\theta_m}(\mathcal{I}\times \mathcal{I})$ also consists of a multi-input ResNet-18 based encoder and a decoder. The input to the motion embedding network is a pair of images, whose channels are concatenated to create a single input tensor before being passed through the network. The output from the motion embedding network is a motion-embedding map, which, in our case, has 3 channels.

\begin{algorithm}[t]
\caption{\label{algo:lossComputation} $[ \mathcal{L}^{kl}_\theta]=$\textbf{computeLossARAP}$(\mathsf{I}^{k}, \mathsf{I}^{l}$)}\vspace{0.5mm}
\begin{algorithmic}[1]

   \State Sample a set of edges $\mathcal{S}^k$ with their vertices $\mathcal{V}^k$.

 \State Estimate  the motion embedding $\mathsf{M}^{kl}=\mathbf{f}_{\theta_m}(\mathsf{I}^{k}, \mathsf{I}^{l})$.
  
  \State Compute $\mathsf{W}^{kl}_\mathcal{S}$ using $\mathsf{M}^{kl}$ and~\eqref{fig:score_calculation} for $\mathcal{S}^k$.
  
  \State Establish $(i,j)$ between $(k,l)$ using  $\mathsf{F}^{kl} = \mathbf{f}_{\theta_f}(\mathsf{I}^k, \mathsf{I}^l)$.
  
  \State Start loop $s = k,l$
  \State -- Estimate the depth $\mathsf{\Lambda}^s=\mathbf{f}_{\theta_d}(\mathsf{I}^{s})$
  
 \State -- Reconstruct 3D 
 ${{\mathsf{X}^s}(\mathsf{\Lambda^s})=[\lambda_1^s\mathsf{u}_1^s,\ldots,\lambda_n^s\mathsf{u}_n^s]}$ for $\mathcal{V}^s$.
 
 \State --  Compute the EDM $\mathsf{E}(\mathsf{\Lambda}^s)$ using ${\mathsf{X}^s}(\mathsf{\Lambda^s})$.
 \State End loop
 
 \State Compute loss $\mathcal{L}_\theta(\mathsf{\Lambda}^k, \mathsf{\Lambda}^l, \mathsf{W}_\mathcal{S}^{kl})$ using \eqref{eq:loss-iso-nrsfm}.

\State Return $\mathcal{L}_\theta(\mathsf{\Lambda}^k, \mathsf{\Lambda}^l, \mathsf{W}_\mathcal{S}^{kl})$.
\vspace{-6pt}
\end{algorithmic}
\algrule
Loss computation between two views for ARAP prior.
\end{algorithm}

\section{Experimental Results}
We perform a variety of experiments to demonstrate the performance of our unsupervised monocular pipeline for non-rigid scenes. In addition to assessing the depth reconstruction performance, we also evaluate the learned motion embeddings in terms of the motion segmentation task.

\begin{table*}[tb]
\vspace{-2mm}
  \centering
  \scriptsize
  \setlength{\tabcolsep}{1.5pt}
  \renewcommand{\arraystretch}{1.2}
  \begin{tabular}{ll|ccccccc|ccccccc}
    \thickhline %
    & & \multicolumn{7}{c|}{VolumeDeform \cite{volumedeform} dataset} &
        \multicolumn{7}{c}{MPI Sintel \cite{sintel} dataset} \\
    \textbf{Scene} &\textbf{Method}                   & \cellcolor{red!25}Abs Rel $\!\downarrow$&  \cellcolor{red!25}Sq Rel $\!\downarrow$&  \cellcolor{red!25}RMSE $\!\downarrow$& \cellcolor{red!25}RMSE$_{log}$ $\!\downarrow$ & 
    \cellcolor{blue!25}\textbf{\tiny$\delta\!\!<\!\!1.25$}   $\!\uparrow$& \cellcolor{blue!25}\textbf{\tiny$\delta\!\!<\!\!1.25^2$} $\!\uparrow$& \cellcolor{blue!25}\textbf{\tiny$\delta\!\!<\!\!1.25^3$} $\!\uparrow$
    & \cellcolor{red!25}Abs Rel $\!\downarrow$ &  \cellcolor{red!25}Sq Rel $\!\downarrow$ &  \cellcolor{red!25}RMSE $\!\downarrow$ & \cellcolor{red!25}RMSE$_{log}$ $\!\downarrow$ & \cellcolor{blue!25}\textbf{\tiny$\delta\!\!<\!\!1.25$}   $\!\uparrow$& \cellcolor{blue!25}\textbf{\tiny$\delta\!\!<\!\!1.25^2$} $\!\uparrow$& \cellcolor{blue!25}\textbf{\tiny$\delta\!\!<\!\!1.25^3$} $\!\uparrow$\\ \thickhline
    \textbf{Mostly}    & PackNet \cite{packnet}  &0.691  &2.879   &10.544   &1.739   &\textbf{0.425}  &\textbf{0.591} &0.667 
    
    &0.355 & 9.278 & 10.923 & 0.544 & \underline{0.499} & 0.789 & \underline{0.885}\\
    
        \textbf{Rigid}    &Li~\etal \cite{google-depth}  &0.871 &5.267 &\textbf{4.284} &0.757 &0.214 &0.496 &0.680
    &0.412 &3.798 & \textbf{3.256} & 0.663 & 0.476 & 0.676 & 0.817 \\
    
    \textbf{Seq.}     & Ours w/ motion &\underline{0.562} & \underline{2.483} &5.327  &\underline{0.700} &0.294 &0.523 &\underline{0.688} 
    
    &\textbf{0.213} & \textbf{1.400} & 6.724 & \underline{0.533} & \textbf{0.566} & \textbf{0.807} & \textbf{0.886} \\
    
       & Ours w/o motion &\textbf{0.549} &\textbf{2.441} &\underline{5.315} &\textbf{0.694} &\underline{0.300} &\underline{0.525} &\textbf{0.704}
         &\underline{0.255} & \underline{1.710} & \underline{3.780} & \textbf{0.528} & 0.493 & \underline{0.794} & 0.883
    \\ \hline

    \textbf{Non-}  & PackNet \cite{packnet} &0.774 &3.595 &4.196 &2.680 &0.242 &0.386 &0.535
   &1.480 & 14.737 & 7.465 & 0.882 & 0.327 & 0.508 & 0.640 \\  %
     \textbf{Rigid}    &Li~\etal \cite{google-depth}  &1.217 &9.254 &4.391 &\underline{0.852} &0.211 &0.419 &0.594
    &1.312 & 15.242 & \textbf{6.278} & 0.895 & 0.320 & 0.498 & 0.618 \\
    
    \textbf{Seq.}  & Ours w/ motion &\underline{0.562} &\underline{1.675} &\textbf{2.576} &\textbf{0.533} &\textbf{0.492} &\textbf{0.673} &\textbf{0.823} 
    &\underline{0.638} & \underline{4.833} & \underline{7.602} & \textbf{0.833} & \textbf{0.442} & \textbf{0.615} & \textbf{0.713}\\
       & Ours w/o motion &\textbf{0.519} &\textbf{1.363} &\underline{2.767} &0.553 &\underline{0.373} &\underline{0.635} &\underline{0.817}
    &\textbf{0.617} & \textbf{4.686}  & 7.681 & \underline{0.835} & \underline{0.406} & \underline{0.594} & \underline{0.708}\\  \hline

    \textbf{All}          & PackNet \cite{packnet} &0.673 & 3.623 & 6.585 & 2.227 & 0.299 & 0.465 & 0.576
    &1.134 & 13.057 & 8.529 & \underline{0.778} & 0.380 & 0.594 & 0.716\\
     \textbf{Seq.}    &Li~\etal \cite{google-depth}  &0.992 & 6.855 &\textbf{4.268} & 0.777 & 0.211 & 0.465 & 0.652
    &1.012 & 11.427 & \textbf{5.271} & 0.818 & 0.372 & 0.557 & 0.684 \\
      & Ours w/ motion &\underline{0.578} & \underline{2.513} & 5.210 & \underline{0.726} & \textbf{0.400} & \textbf{0.597} & \underline{0.748}
    &\textbf{0.507} & \textbf{3.777} & 7.332 & \textbf{0.741} & \textbf{0.480} & \textbf{0.674}    & \textbf{0.766}\\
          & Ours w/o motion&\textbf{0.540} & \textbf{2.301} & \underline{4.294} &\textbf{0.721} & \underline{0.349} & \underline{0.582} & \textbf{0.753}
          &\underline{0.505} & \underline{3.770} & \underline{6.481} & \textbf{0.741} & \underline{0.433} & \underline{0.655} & \underline{0.762}
    \\ \thickhline %
  \end{tabular}
  \vspace{-8pt}
  \caption {\textbf{Depth reconstruction evaluation for  VolumeDeform \cite{volumedeform} and MPI Sintel \cite{sintel} datasets.} Sequences are classified into two categories: mostly rigid or non-rigid. For MPI Sintel, scenes with less than 10\% dynamic pixels are labeled as mostly-rigid. VolumeDeform split is based on our qualitative analysis. Our method with motion embeddings results in superior performance in highly non-rigid scenes in both datasets. Best results are in bold, second best are underlined. Both of our methods generally perform better than our baselines, and for the non-rigid sequences, we can see the benefits of using our method with motion embeddings.
  \vspace{-5mm}}
\label{tab:depthtable}
\end{table*}

\boldparagraph{Datasets.}
In our experiments, we use \textit{MPI Sintel}~\cite{sintel}, \textit{VolumeDeform}~\cite{volumedeform}, and \textit{Hamlyn Laparascopic Video Dataset}~\cite{hamlyn}.
From the \textit{MPI Sintel} training subset, we select a set of 14 final-pass sequences with varying levels of motions. We also use the \textit{VolumeDeform} dataset consisting of 8 sequences for further experiments on deforming scenes. Lastly, we evaluate our method on the \textit{Hamlyn Centre Laparoscopic Video Dataset}, which consists of rectified stereo image pairs collected from a partial nephrectomy. Ground-truth depth maps for \textit{VolumeDeform} are obtained from the provided depth recordings; and OpenSFM \cite{opensfm-1, opensfm-2} was used to obtain ground-truth depth for \textit{Hamlyn} using the calibrated stereo pairs in the dataset. For \textit{Sintel}, both ground-truth depth maps as well as optical flow maps are provided, which is not the case for the other two datasets. A pre-trained supervised optical flow network, RAFT~\cite{raft}, performs reasonably well on \textit{VolumeDeform} for estimating dense correspondences. On \textit{Hamlyn}, due to a large domain gap, we fine-tune an unsupervised model, DDFlow~\cite{ddflow}.

\begin{figure}[t]
  \centering
  \scriptsize
  \setlength{\tabcolsep}{1pt}
  \newcommand{\sz}{0.31}           %
  \newcommand{\cgs}{\hspace{3pt}}   %
  \begin{tabular}{cccc}
    \multirow{1}{*}[73pt]{\rotatebox{90}{\textbf{Motion\hspace{8pt}Depth\hspace{9pt}Image}}} & 
    \includegraphics[width=\sz\columnwidth]{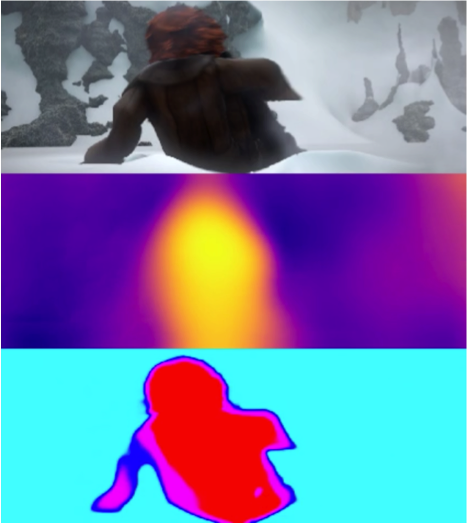} &
    \includegraphics[width=\sz\columnwidth]{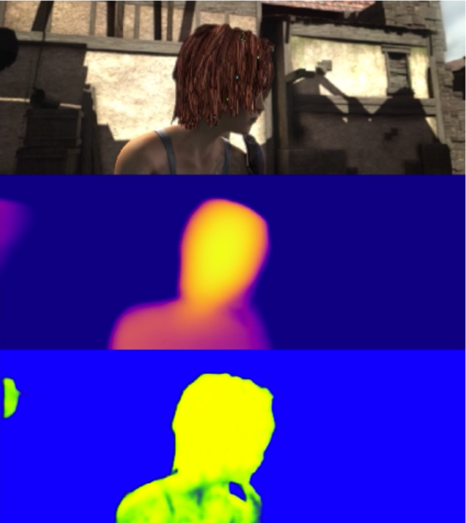} &
    \includegraphics[width=\sz\columnwidth]{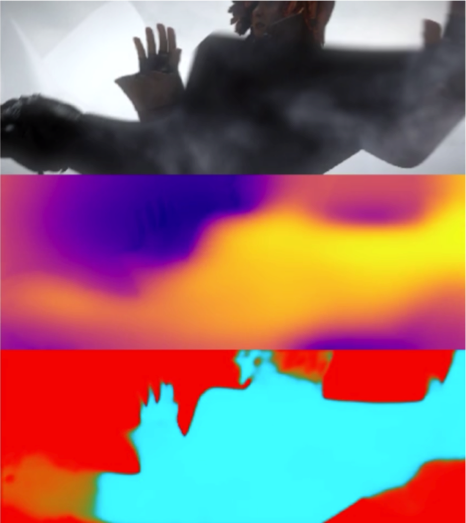} 
    \\
    \multirow{1}{*}[73pt]{\rotatebox{90}{\textbf{Motion\hspace{8pt}Depth\hspace{9pt}Image}}} & 
    \includegraphics[width=\sz\columnwidth]{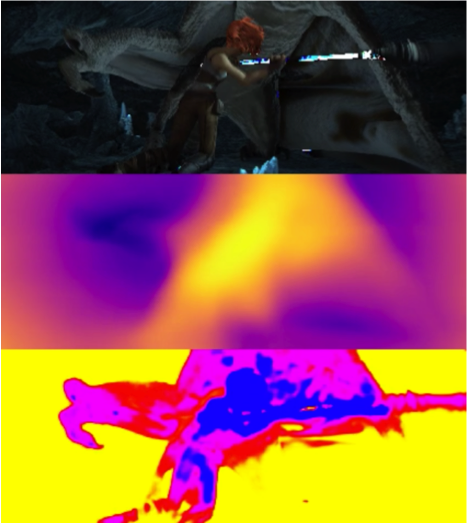} &
    \includegraphics[width=\sz\columnwidth]{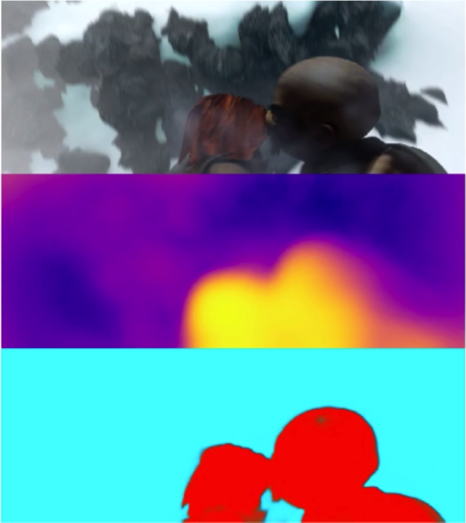} &
    \includegraphics[width=\sz\columnwidth]{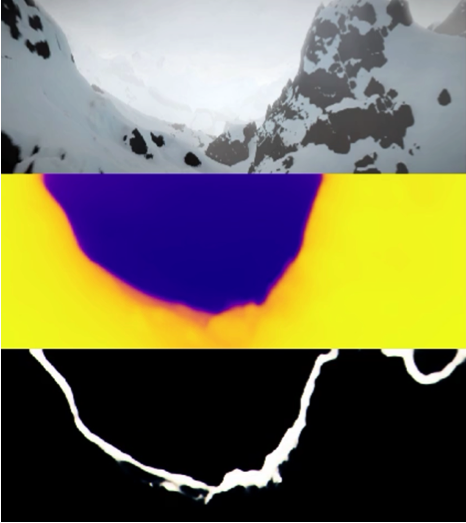} 
    \\
    \multirow{1}{*}[73pt]{\rotatebox{90}{\textbf{Motion\hspace{8pt}Depth\hspace{9pt}Image}}} & 
    \includegraphics[width=\sz\columnwidth]{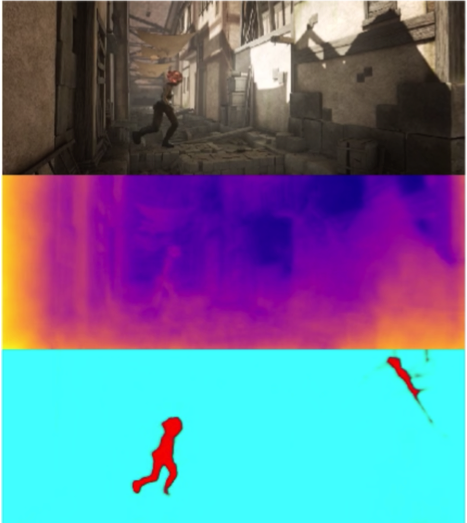} &
    \includegraphics[width=\sz\columnwidth]{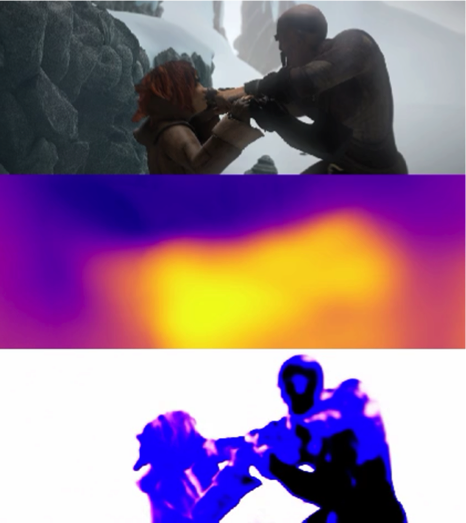} &
    \includegraphics[width=\sz\columnwidth]{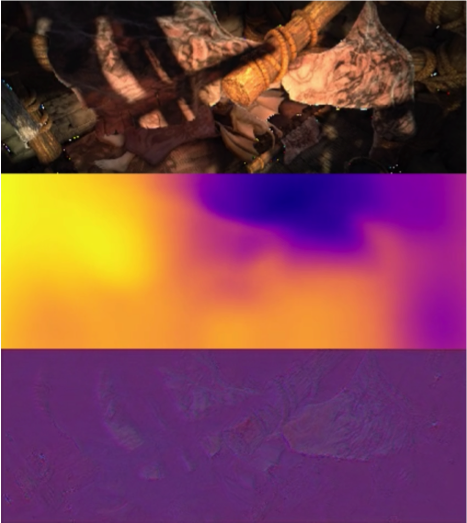}\\[-8pt]
  \end{tabular}
\caption{\textbf{Our qualitative results with depth and motion networks for  MPI Sintel dataset.} Top to bottom: RGB, predicted depth, and predicted embeddings. Embedding colors encoding information about motion clusters are not related across sequences. \vspace{-4mm}}
    \label{fig:unsupervisedEmbedding-sintel}

\end{figure}

\boldparagraph{Training Details.} We follow two different sets of experiments: test-time training or inference. As the small volumes of \textit{MPI Sintel} and \textit{VolumeDeform} datasets do not yet enable us to train a depth network that can generalize, we perform \textit{test-time training} for each image sequence, \ie we train separate models for each sequence, for our methods and the deep baselines. In this pipeline, for all of the methods, the training is rather used as an optimization procedure for solving for the depth in all frames, given an image sequence. For the \textit{Hamlyn} dataset where the amount of data enables us to train models that could generalize, we present inference results. We use the original training and test split from the Hamlyn dataset, and further create train/val/test splits. We implemented our model in PyTorch \cite{pytorch}. 
In all experiments, the ResNet18-based depth encoder and motion-embedding encoder were initialized with weights from ImageNet \cite{imagenet} pre-training. Adam \cite{adam} optimizer with  $\beta_1= 0.9$,  $\beta_2=0.999$ was used, in combination with a learning rate decay by 0.1 every 10 epochs. We follow a two-stage training. In the first stage, we jointly train the motion-embedding network and the depth network. In the second stage, we freeze the weights of the motion-embedding network, and re-initialize the depth network training. In the latter stage, we apply a $\tau$-offset to the weights, to impose the constraint from~\eqref{eq:deep-iso-nrsfm}. For the selection of the edges, we uniformly sample pairs of points with correspondences in the consecutive frame. We empirically find 100K pairs (edges) to provide a suitable trade-off between increased memory requirements and slow convergence. 
Further details are in the supplementary material.

\boldparagraph{Depth reconstruction results.}
Our qualitative results are demonstrated in Fig.~\ref{fig:teaser}, ~\ref{fig:unsupervisedEmbedding-sintel} and  ~\ref{fig:unsupervisedEmbedding-hamlyn}.
We report our results by performing per-image median ground-truth scaling for each method, as introduced in \cite{zhou}.
For the \textit{MPI Sintel} dataset, we evaluate the performance only where the GT depth is smaller than 50 m. For the other datasets, we evaluate the depth for every point where the GT is available. As shown in  Table~\ref{tab:depthtable}, \ref{tab:depthtable_segmented} and \ref{tab:hamlyntable}, we report results from ``Ours w/ motion", and ``Ours w/o motion". ``Ours w/ motion" refers to the setting we described in Section~\ref{sec:learningdepth}, where we learn the motion similarity scores serving the ARAP prior. ``Ours w/o motion" refers to the same unsupervised pipeline, except that all rigidity scores are explicitly set to 1. With this modification, piecewise-rigidity is implied during training. In Table \ref{tab:depthtable_segmented}, we further compare the performance of our methods, by separately evaluating the performance in dynamic and static parts of the images. In Table~\ref{tab:hamlyntable}, the results from the non-rigid reconstruction models \textit{DLH} \cite{dai2012simple}, \textit{MDH} \cite{probst2018incremental} and \textit{MaxRig} \cite{maxrig} were obtained using the complete test sequence to perform reconstruction. The evaluations for \textit{DLH} and \textit{MaxRig} were performed only at the reconstructed points as these methods provide sparse reconstructions.

\begin{table}[tb]
  \centering
  \scriptsize
    \setlength{\tabcolsep}{1.9pt}
    \renewcommand{\arraystretch}{1.3}
    \begin{tabular}{ll|ccccccc}
    \thickhline %
     & 
    &\cellcolor{red!25}Abs & 
    \cellcolor{red!25}Sq & 
    \cellcolor{red!25}RMSE & 
    \cellcolor{red!25}RMSE$_\text{log}$ & 
    \cellcolor{blue!25}\textbf{\tiny$\delta\!\!<\!\!1.25$} & 
    \cellcolor{blue!25}\textbf{\tiny$\delta\!\!<\!\!1.25^2$} & 
    \cellcolor{blue!25}\textbf{\tiny$\delta\!\!<\!\!1.25^3$} \\
    \textbf{Scene}  & \textbf{Method}         & 
    \cellcolor{red!25}Rel$\downarrow$ & 
    \cellcolor{red!25}Rel$\downarrow$&  
    \cellcolor{red!25}$\downarrow$ & 
    \cellcolor{red!25}$\downarrow$ & 
    \cellcolor{blue!25}$\uparrow$ & 
    \cellcolor{blue!25}$\uparrow$ & 
    \cellcolor{blue!25}$\uparrow$\\ \thickhline

    \textbf{Rigid}  & w/ motion  & \textbf{0.387} &\textbf{2.276} &5.954	&\textbf{0.778}	&0.407	&0.632	&0.718 \\   
      
     \textbf{Parts}  &w/o motion &0.393	&2.335	&\textbf{5.529}	&0.787	&\textbf{0.431}	&\textbf{0.636} &\textbf{0.725} \\  \hline 

       \textbf{Non-Rig}     & w/ motion &\textbf{0.702}	&\textbf{1.828}	&\textbf{1.618}	&\textbf{0.503}	&\textbf{0.348}	&\textbf{0.608}	&\textbf{0.795} \\
    
     \textbf{Parts}  &w/o motion & 0.889	&5.879	&8.919	&0.551	&0.329	&0.592	&0.732 \\
     \thickhline %
    
  \end{tabular}
  \vspace{-8pt}
  \caption {\textbf{Metrics computed over the dynamic parts or static parts separately.} The separation between the dynamic and static parts was obtained using the ground-truth motion segmentation maps. Metrics are averaged over all images from all sequences.}
\label{tab:depthtable_segmented}
\end{table}

\begin{figure}[t]
    \vspace{-2mm}
  \centering
  \small
  \setlength{\tabcolsep}{1pt}
  \newcommand{\sz}{0.31}           %
  \newcommand{\cgs}{\hspace{3pt}}   %
  \begin{tabular}{cccc}
    \multirow{1}{*}[60pt]{\rotatebox{90}{\textbf{\;Depth\;\;\;\;\;Image\;}}} & 
    \includegraphics[width=\sz\columnwidth]{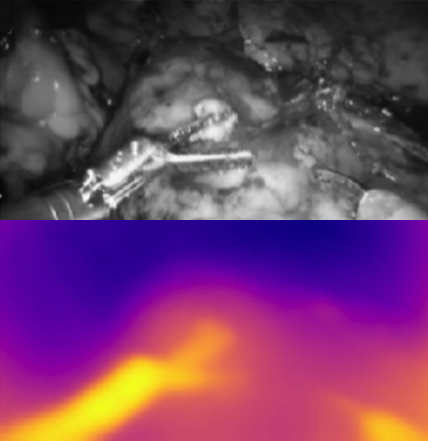} &
    \includegraphics[width=\sz\columnwidth]{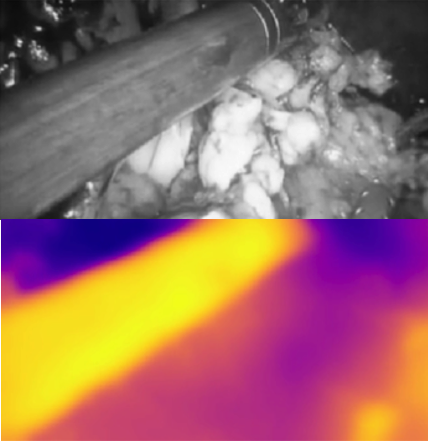} &
    \includegraphics[width=\sz\columnwidth]{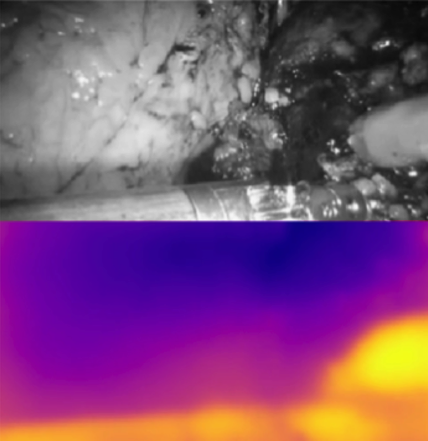} \\[-8pt]
  \end{tabular}
    \caption{\textbf{Depth reconstruction results for the Hamlyn dataset.} Images and depth predictions are presented. The  images shown in grayscale here are used in RGB format during training.}
    \label{fig:unsupervisedEmbedding-hamlyn}
\end{figure}

\begin{table}[tb]
\vspace{-2mm}
\centering
\scriptsize
\setlength{\tabcolsep}{2.5pt}
\renewcommand{\arraystretch}{1.3}
\begin{tabular}{l|ccccccc}
\thickhline %
 & 
\cellcolor{red!25}Abs & 
\cellcolor{red!25}Sq & 
\cellcolor{red!25}RMSE & 
\cellcolor{red!25}RMSE$_\text{log}$ & 
\cellcolor{blue!25}\textbf{\tiny$\delta\!\!<\!\!1.25$} & 
\cellcolor{blue!25}\textbf{\tiny$\delta\!\!<\!\!1.25^2$} & 
\cellcolor{blue!25}\textbf{\tiny$\delta\!\!<\!\!1.25^3$} \\
\textbf{Method}                   & 
\cellcolor{red!25}Rel$\downarrow$ & 
\cellcolor{red!25}Rel$\downarrow$&  
\cellcolor{red!25}$\downarrow$ & 
\cellcolor{red!25}$\downarrow$ & 
\cellcolor{blue!25}$\uparrow$ & 
\cellcolor{blue!25}$\uparrow$ & 
\cellcolor{blue!25}$\uparrow$\\ \thickhline

PackNet \cite{packnet}          &0.389 &2.533 &4.378 &0.418 &0.469 &0.733 &0.882    \\ 
Li~\etal \cite{google-depth}    &0.344 &2.078 &4.169 &0.385 &0.484 &0.772 &0.907     \\ 
DLH \cite{dai2012simple}                  &0.644 &5.150 &6.108 &0.611 &0.291 &0.547 &0.744     \\ 
MDH \cite{probst2018incremental}&1.222 &14.210 &4.530 &0.412 &0.572 &0.829 &0.945    \\ 
MaxRig \cite{maxrig}            &0.232 &1.083 &3.247 &0.294 &0.587 &0.850 &0.962     \\ 
N-NRSfM \cite{Sidhu2020} &0.392 & 2.548 &5.295 &0.730 &0.361 &0.626 &0.779\\ 
Ours w/ motion                   &\underline{0.217} &\underline{0.941} &\underline{3.120} &\underline{0.279} &\underline{0.608} &\underline{0.886} &\underline{0.977}    \\ 
Ours w/o motion                 &\textbf{0.213} &\textbf{0.921} &\textbf{3.065} &\textbf{0.272} &\textbf{0.618} &\textbf{0.891} &\textbf{0.980} 
\\ \hline 
\end{tabular}
\vspace{-0.8em}
\caption {\textbf{Depth reconstruction performance evaluation for Hamlyn Dataset \cite{hamlyn}.} Both of our methods result in superior performance compared to previous methods.}
\label{tab:hamlyntable}
\end{table}

\vspace{-1mm}

\boldparagraph{Motion segmentation results.}
We also evaluate the performance of our per-pixel motion embedding. As the \textit{ground-truth} motion-embeddings are not available, we perform the evaluation on \textit{MPI Sintel} dataset using the available ground-truth motion segmentation maps \cite{gt-sintel-motsegm}. With that purpose, we first calculate the static background embedding vector by taking channel-wise median of the image border embeddings, assuming the image border mostly consists of the static background. Then we separate the static part from the dynamic parts of the scene, by thresholding the distances between the static background embedding and each per-pixel motion embedding vectors. In Table~\ref{tab:threshold2}, we compare the performance of our model with two recent works \cite{motsegm-baseline} and \cite{loquercio}, and demonstrate our competitive results for the motion segmentation task in Fig.~\ref{fig:mot-segm-results}.

\begin{table}[tb]
\vspace{-2mm}
  \centering
  \footnotesize
  \setlength{\tabcolsep}{1.6pt}
  \renewcommand{\arraystretch}{1.1}
  \floatbox[\capbeside]{table}[0.45\columnwidth]%
  {\caption {\textbf{Average overall pixel accuracy on MPI Sintel~\cite{sintel}.} Overall pixel accuracy and intersection-over-union metrics were averaged over the sequences.}
  \label{tab:threshold2}}%
  {\begin{tabular}{l|cc}
    \thickhline
           & \cellcolor{blue!25} Mean           &\cellcolor{blue!25} Mean \\
    \textbf{Method} & \cellcolor{blue!25} ACC $\uparrow$ &\cellcolor{blue!25} IoU $\uparrow$ \\\thickhline
    Taniai~\etal \cite{motsegm-baseline} & \underline{0.872}  & \underline{0.728}\\
    Yang~\etal~\cite{loquercio}          & 0.755  & 0.519\\
    Ours w/ motion      & \textbf{0.912} & \textbf{0.731}\\
    \thickhline 
  \end{tabular}}
\end{table}

\begin{figure}[t]
\vspace{-3mm}
  \centering
  \scriptsize
  \setlength{\tabcolsep}{1pt}
  \newcommand{\sz}{0.314}           %
  \newcommand{\cgs}{\hspace{3pt}}   %
  \begin{tabular}{cccc}
    \multirow{1}{*}[18pt]{\rotatebox{90}{\textbf{Frame}}} & 
    \includegraphics[width=\sz\columnwidth]{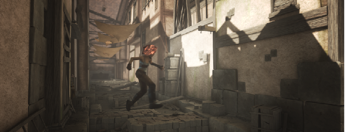} &
    \includegraphics[width=\sz\columnwidth]{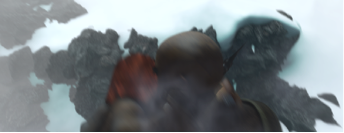} &
    \includegraphics[width=\sz\columnwidth]{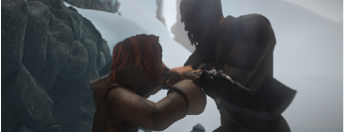} 
    \\
    
    \multirow{1}{*}[15pt]{\rotatebox{90}{\textbf{GT}}}  &   
    \includegraphics[width=\sz\columnwidth]{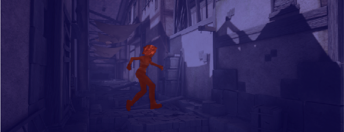} &
    \includegraphics[width=\sz\columnwidth]{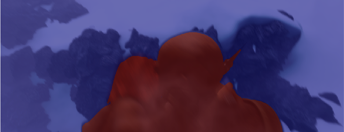} &
    \includegraphics[width=\sz\columnwidth]{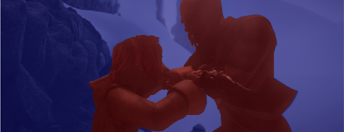} 
    \\
    
    \multirow{1}{*}[15pt]{\rotatebox{90}{\cite{motsegm-baseline}}} &   
    \includegraphics[width=\sz\columnwidth]{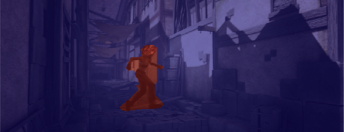} &
    \includegraphics[width=\sz\columnwidth]{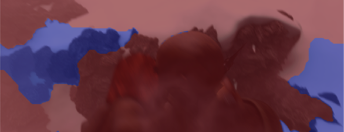} &
    \includegraphics[width=\sz\columnwidth]{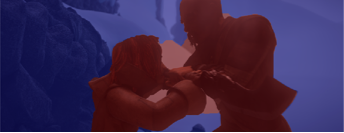} 
    \\
    
    \multirow{1}{*}[15pt]{\rotatebox{90}{\cite{loquercio}}}  &   
    \includegraphics[width=\sz\columnwidth]{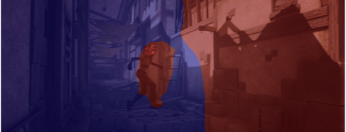} &
    \includegraphics[width=\sz\columnwidth]{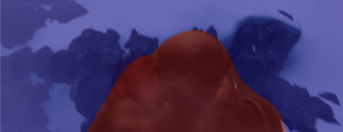} &
    \includegraphics[width=\sz\columnwidth]{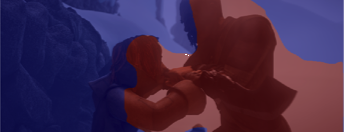} 
    \\

    \multirow{1}{*}[15pt]{\rotatebox{90}{\textbf{Ours}}}  &   
    \includegraphics[width=\sz\columnwidth]{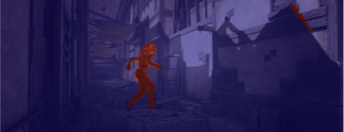} &
    \includegraphics[width=\sz\columnwidth]{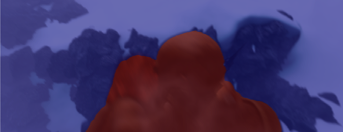} &
    \includegraphics[width=\sz\columnwidth]{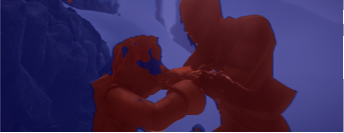} \\[-8pt]
  \end{tabular}
 \caption{\textbf{Motion segmentation performance.} Our method gives competitive results despite motion segmentation being only an auxiliary output in our pipeline.}
    \label{fig:mot-segm-results}
\end{figure}

\vspace{1mm}
\boldparagraph{Discussion.}
Through numerous experiments, we demonstrated that our formulation with ARAP assumption performs well, even for highly dynamic scenes. However, for some scenes with small deformations, we have observed that the pairwise rigidity assumption performs better. Such behaviour is often due to the short video sequence-based supervision.
Often, the scene rigidity may be maintained for a small duration. More importantly, large scene parts are often rigid, which offer better reconstruction when stronger priors are employed. However, rigid reconstruction can be performed rather easily, and despite the fact that most of the time dynamic and deformable objects are of higher interest, they are also more difficult to reconstruct without a suitable prior. In this direction, we investigated several of the commonly used priors, and demonstrated the utility of ARAP assumption for our pipeline. We believe that our framework can be very useful when the scene-specific priors are known for individual use-cases. Note that our formulation highly differs from existing unsupervised monocular depth methods. More specifically, the sole source of self-supervision in our case is the minimization of the geometric errors, and not the photometric errors. This is particularly beneficial for avoiding reconstruction errors that could stem from photometric errors, especially in the presence of deformations. This fact, along with the motion-independent formulation may explain why our pair-wise rigidity works better than a motion-explicit rigid unsupervised learning method \cite{packnet} for non-rigid scenes.
Even if we seek for pair-wise rigidity in non-rigid scenes, since the exact rigid reconstruction is not feasible, the optimal reconstruction is bound to be ARAP. 
\vspace{-2.5mm}

\section{Conclusion}
\vspace{-2mm}
We demonstrated the possibility for unsupervised learning of depth using monocular videos of non-rigid scenes in order to infer depth from a single image. Unsupervised learning is enabled by established priors used in the NRSfM literature. 
Building upon existing works, we reformulated commonly used NRSfM priors within a unified framework suitable for neural network training. We further investigated the utility of the ARAP prior for dense depth reconstruction from a single view. Our experiments demonstrate improvements over the alternative methods on numerous datasets. The improvements can be attributed to our geometric error-based loss function~\eqref{eq:loss-iso-nrsfm} computed directly in the 3D reconstruction space. We assumed only the minimum necessary prior in this work, as we aimed to learn without any supervision (including additional scene priors). Depending on scene properties, different priors can be integrated into our pipeline in a similar way, boosting the performance further.\vspace{1mm}

\noindent
\begin{minipage}{\columnwidth}
	\vspace{3pt}
	\footnotesize
	\noindent
	\textbf{Acknowledgments.}~
	This research was partly supported by Innosuisse funding (Grant No.~34475.1 IP-ICT), by FIFA, and by the ETH Zurich project with Specta and EU H2020 ENCORE (Grant No.~820434).
\end{minipage}

{
\small
\bibliographystyle{ieee_fullname}
\bibliography{00_egbib}
}

\newpage
\clearpage

\twocolumn[
\begin{center}
  {\Large \bf \Large{\bf Unsupervised Monocular Depth Reconstruction of Non-Rigid Scenes} \\ -- Appendix -- \par}
  \vspace*{12pt}
\end{center}
]

\appendix

\setcounter{section}{0}
\setcounter{theorem}{0}
\section{Proof of Proposition on Euclidean Distance Matrices (EDM) of Low-Rank Structures}  \label{sec:proof}
In this section we are providing the proof of the proposition on Euclidean distance matrices of low-rank structures, for the completeness of our framework which unifies different priors from the non-rigid reconstruction literature, using EDM measures across views.

\setcounter{section}{4}
\setcounter{theorem}{1}
\begin{proposition}[EDM of Low-rank structures]
For 3D structures that can be represented using $b$ linear bases, the constraints ${\text{rank}(\mathsf{D}^{kl})\leq \text{min}(8,b+2)}$ for all $k,l$ and $\text{rank}(\mathbf{D})\leq \frac{(b+1)(b+2)}{2}$ are necessary
to recover the low-rank structures, using Euclidean distance matrices.  
\end{proposition}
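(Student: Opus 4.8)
The plan is to work the $b$-basis hypothesis directly at the level of Gram matrices and squared distances. I would write the low-rank prior as a shared factorization $\mathsf{X}^k(\mathsf{\Lambda}^k) = \mathsf{C}^k\mathsf{B}$, where $\mathsf{B}\in\mathbb{R}^{b\times n}$ is a view-independent shape basis (its rows are the $b$ linear bases) and $\mathsf{C}^k\in\mathbb{R}^{3\times b}$ collects the per-view coefficients. Then the Gram matrix factorizes as $\mathsf{G}(\mathsf{\Lambda}^k)=\mathsf{B}^{\top}\mathsf{P}^k\mathsf{B}$ with $\mathsf{P}^k=(\mathsf{C}^k)^{\top}\mathsf{C}^k\in\mathbb{R}^{b\times b}$ symmetric of rank at most $3$. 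Since an EDM is invariant to a global translation of the points, I may assume the bases are mean-centred; this only keeps the bookkeeping clean. The two rank constraints then follow from two different readings of the difference $\mathsf{D}^{kl}=\mathsf{E}(\mathsf{\Lambda}^k)-\mathsf{E}(\mathsf{\Lambda}^l)$.

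First I would prove the per-pair bound. Substituting the factorization into the EDM identity gives $\mathsf{D}^{kl}=\mathsf{h}\mathbf{1}^{\top}-2(\mathsf{G}^k-\mathsf{G}^l)+\mathbf{1}\mathsf{h}^{\top}$, where $\mathsf{h}=\text{diag}(\mathsf{G}^k-\mathsf{G}^l)$ is read as a column vector. The two outer-product terms are each rank one, so $\text{rank}(\mathsf{D}^{kl})\le \text{rank}(\mathsf{G}^k-\mathsf{G}^l)+2$. I then bound the middle term two ways: model-free, $\text{rank}(\mathsf{G}^k-\mathsf{G}^l)\le 6$ because each Gram matrix has rank at most $3$; under the prior, $\mathsf{G}^k-\mathsf{G}^l=\mathsf{B}^{\top}(\mathsf{P}^k-\mathsf{P}^l)\mathsf{B}$ has rank at most $b$, the size of the inner factor. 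Taking the smaller of the two yields $\text{rank}(\mathsf{D}^{kl})\le\min(6,b)+2=\min(8,b+2)$.

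Next I would treat the stacked matrix $\mathbf{D}$, whose rows are indexed by a view-pair together with a first point index $i$ and whose columns are indexed by the second point index $j$. A single entry equals $(\mathsf{b}_i-\mathsf{b}_j)^{\top}(\mathsf{P}^k-\mathsf{P}^l)(\mathsf{b}_i-\mathsf{b}_j)$, where $\mathsf{b}_i\in\mathbb{R}^b$ is the $i$-th column of $\mathsf{B}$. Viewing a fixed row as a function of the column index $j$, this is a polynomial of degree at most two in the coordinates $b_{j1},\dots,b_{jb}$: expansion produces a constant term (from the part depending on $i$ only), linear terms $b_{jc}$, and quadratic terms $b_{jc}b_{jc'}$. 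Hence every row of $\mathbf{D}$ lies in the span of the evaluations at the $n$ points of the monomials $\{1\}\cup\{b_{jc}\}\cup\{b_{jc}b_{jc'}\}$, a space of dimension at most $1+b+\binom{b+1}{2}=\frac{(b+1)(b+2)}{2}$, giving $\text{rank}(\mathbf{D})\le\frac{(b+1)(b+2)}{2}$.

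Finally I would emphasize that both statements are \emph{necessary} conditions: they are implied by a $b$-basis structure, so any EDM-based recovery must respect them, though they need not characterize such structures. The main obstacle lies in the second part — tracking precisely why the constant monomial survives (which produces the extra $b+1$ beyond the naive $\binom{b+1}{2}$ count) even though the EDM itself is translation invariant, and fixing the orientation of $\mathbf{D}$ so that each row is genuinely a degree-two polynomial in the column index. The per-pair bound is comparatively routine once the EDM identity is combined with the rank-one decomposition.
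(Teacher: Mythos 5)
Your argument is correct and follows essentially the same route as the paper's: part (i) is the identical decomposition of $\mathsf{D}^{kl}$ into two rank-one outer products plus the Gram difference bounded by $\min(6,b)$, and part (ii) performs the same count $1+b+\tfrac{b(b+1)}{2}$, merely repackaged — where the paper splits $\mathbf{D}=\mathbf{A}+\mathbf{B}+\mathbf{C}$ and applies rank subadditivity to the constant, bilinear, and diagonal-quadratic blocks separately, you observe in one step that every row of $\mathbf{D}$ is the evaluation of a degree-$\le 2$ polynomial in the coordinates of $\mathsf{b}_j$ and bound the row space by the span of the corresponding monomials. The monomial-span phrasing is a slightly cleaner presentation of the same underlying decomposition, so no substantive difference to report.
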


\setcounter{section}{1}
\setcounter{equation}{0}
\begin{proof}
Let us denote the low-rank factorization of the 3D structure $\mathsf{X}^k $ in view k as $\mathsf{X}^k = \Mmat_k \mathsf{B}$, with shape basis $\mathsf{B} \in \mathbb{R}^{b \times n}$, and view-specific projection $\Mmat_k \in \mathbb{R}^{3 \times b}$~\cite{wolf2002projection}.

\vspace{1em}
\noindent\textbf{(i) Rank of $\mathsf{D}^{kl}$.} We begin the first part of the proposition. By using the EDM in Eq.~(1), we can write  $\mathsf{D}^{kl} = \mathsf{E}(\mathsf{X}^k)-\mathsf{E}(\mathsf{X}^l)$ as follows,
\begin{align}
\label{eq:rank_Dkl}
\begin{split}
    \mathsf{D}^{kl}  = & \,
    \underbrace{\text{diag}(\mathsf{G}(\mathsf{X}^k)-\mathsf{G}(\mathsf{X}^l))\mathbf{1}^\intercal}_{A_{k,l}} \\
    & \underbrace{-2(\mathsf{G}(\mathsf{X}^k)-\mathsf{G}(\mathsf{X}^l))}_{B_{k,l}}\\
    &+
    \underbrace{\mathbf{1}\text{diag}(\mathsf{G}(\mathsf{X}^k)-\mathsf{G}(\mathsf{X}^l))^\intercal}_{C_{k,l}}, \\
    \text{with} \quad \mathsf{G}(\mathsf{X}^{k,l}) =&  \, \mathsf{B}^\intercal \Mmat_{k,l}^\intercal \Mmat_{k,l} \mathsf{B}.\\
\end{split}
\end{align}
Using $\text{rank}(A_{k,l}) \leq 1$ and $\text{rank}(C_{k,l}) \leq 1$, together with, 
\begin{align}
\begin{split}
    \text{rank}(B_{k,l}) \leq \, & \text{rank}(\mathsf{B}^\intercal \Mmat_{k}^\intercal \Mmat_{k}
    \mathsf{B}) +
    \text{rank}(\mathsf{B}^\intercal \Mmat_{l}^\intercal \Mmat_{l}
    \mathsf{B})\\
    \leq \, & \text{min}(6,b),\\
\end{split}
\end{align}
we obtain,
\begin{align}
\begin{split}
\text{rank}(\mathsf{D}^{kl}) &\leq \text{rank}(A_{k,l}) + \text{rank}(B_{k,l})+  \text{rank}(C_{k,l}) \\
 &\leq  \text{min}(8,b+2).
\end{split}
\end{align}

\vspace{1em}
\noindent\textbf{(ii) Rank of $\mathbf{D}$.} Regarding the second part of the proposition, we use $\mathbf{D}^\intercal = [(\mathsf{D}^{12})^\intercal,\ldots,(\mathsf{D}^{kl})^\intercal,\ldots,(\mathsf{D}^{(m-1)m})^\intercal]$ to denote differences of EDMs between m pairs of views.
Using Eq.~\eqref{eq:rank_Dkl}, we can expand $\mathbf{D}$ as follows,
\begin{align}
\begin{split}
\mathbf{D} =\underbrace{\begin{bmatrix}A_{1,2} \\ \vdots \\ A_{m-1,m}\\ \end{bmatrix}}_{\mathbf{A}} +
\underbrace{\begin{bmatrix}B_{1,2} \\ \vdots \\ B_{m-1,m}\\ \end{bmatrix}}_{\mathbf{B}} + 
\underbrace{\begin{bmatrix}C_{1,2} \\ \vdots \\ C_{m-1,m}\\ \end{bmatrix}}_{\mathbf{C}},
\end{split}
\end{align}
and analyse the three components separately.

\noindent$\mathbf{A:}$ Matrices $A_{k,l}$ are formed by $n$ times repeating the column vector $\text{diag}(\mathsf{B}^\intercal (\Mmat_{k}^\intercal \Mmat_{k} -\Mmat_{l}^\intercal \Mmat_{l} )\mathsf{B})$. Therefore, vertical stacking $A_{k,l}$ to obtain $\mathbf{A}=[A_{1,2}^\intercal \dots A_{m-1,m}^\intercal ]$ will also result in a matrix of at most rank 1, \ie $\text{rank}(\mathbf{A}) \leq 1$.

\noindent$\mathbf{B:}$ Assuming the rank of matrices $B_{k,l}$ is bounded by $\text{rank}(B_{k,l}) \leq b$, then vertical stacking $B_{k,l}$ to obtain $\mathbf{B}$, does not increase the row rank. Therefore, $\text{rank}(\mathbf{B}) \leq b$. 

\noindent$\mathbf{C:}$ Matrices $C_{k,l}$ are formed by $n$ times repeating the row vector $\text{diag}(\mathsf{B}^\intercal \Mmat_{k,l}^\intercal \Mmat_{k,l} \mathsf{B})^\intercal$. For rank considerations, we can omit all constant factors, and all copies of the row vector, except the first one. Stacking them vertically for all view pairs then yields the form,
\begin{align}
\label{eq:rank_pairs_C}
\begin{split}
\tilde{\mathbf{C}} = 
\begin{bmatrix}
\text{diag}(\mathsf{B}^\intercal (\Mmat_{1}^\intercal \Mmat_{1}-\Mmat_{2}^\intercal \Mmat_{2}) \mathsf{B})^\intercal \\
\vdots\\
\text{diag}(\mathsf{B}^\intercal (\Mmat_{m-1}^\intercal \Mmat_{m-1}-\Mmat_{m}^{\intercal} \Mmat_{m}) \mathsf{B})^\intercal \\
\end{bmatrix}.
\end{split}
\end{align}
Denoting the matrix difference $\tilde{\Mmat}_{k,l} = \Mmat_{k}^\intercal \Mmat_{k}-\Mmat_{l}^\intercal \Mmat_{l}$,  $\tilde{\mathbf{C}}$ can be written as follows,
\begin{align}
\label{eq:rank_pairs_C_unit2}
\begin{split}
\tilde{\mathbf{C}} \!=\! 
\begin{bmatrix}
\mathsf{e}_1^\intercal \mathsf{B}^\intercal \tilde{\Mmat}_{1,2} \mathsf{B} \mathsf{e}_1 & \dots & \mathsf{e}_n^\intercal \mathsf{B}^\intercal \tilde{\Mmat}_{1,2} \mathsf{B} \mathsf{e}_n \\
\vdots & \ddots & \vdots \\
\mathsf{e}_1^\intercal \mathsf{B}^\intercal \tilde{\Mmat}_{m-1,m} \mathsf{B} \mathsf{e}_1 & \dots & \mathsf{e}_n^\intercal \mathsf{B} \tilde{\Mmat}_{m-1,m}\mathsf{B} \mathsf{e}_n \\
\end{bmatrix},
\end{split}
\end{align}
where $\mathsf{e}_i \in \mathbb{R}^n$ are corresponding unit vectors. 

The above form reveals that we get a linearly dependent row i, if $\exists \, \alpha_j, \{M_j\}: \, \tilde{\Mmat}_{i} = \sum_{j \neq i} \alpha_j \tilde{\Mmat}_{j}$.
The rank of $\tilde{\mathbf{C}}$ is therefore bounded by the degrees of freedom of matrices $\tilde{\Mmat}_{k,l} = \Mmat_{k}^\intercal \Mmat_{k}-\Mmat_{l}^\intercal \Mmat_{l}$. Due to its symmetric property, the degrees of freedom amounts to the number of upper triangular elements of $\tilde{\Mmat}_{k,l} \in \mathbb{R}^{b \times b}$, which is given by $\frac{b(b+1)}{2}$.

\noindent$\mathbf{D:}$ In summary, the rank of $\mathbf{D}$ is bounded by
\begin{align}
\label{eq:rank_pairs_C_unit}
\begin{split}
\text{rank}(\mathbf{D}) &\leq \text{rank}(\mathbf{A}) + \text{rank}(\mathbf{B}) +\text{rank}(\mathbf{C}) \\
& \leq 1 + b + \frac{b(b+1)}{2} = \frac{(b+1)(b+2)}{2}.
\end{split}
\end{align}

\hfill \qed 
\vspace{-3pt}
\end{proof}

\section{Experimental Setting}  \label{sec:exp}
As in the main paper, we denote our method with motion embedding network as \emph{ours w/ motion}, and we denote the pipeline where the rigidity weights are explicitly set to 1 as \emph{ours w/o motion}. An overview of the architecture of our model consisting of a depth network and a motion embedding network is illustrated in Fig.~\ref{fig:architecture}.

\subsection{Network Architecture}
\begin{figure}[!htb]
  \centering
    \includegraphics[width=1\textwidth]{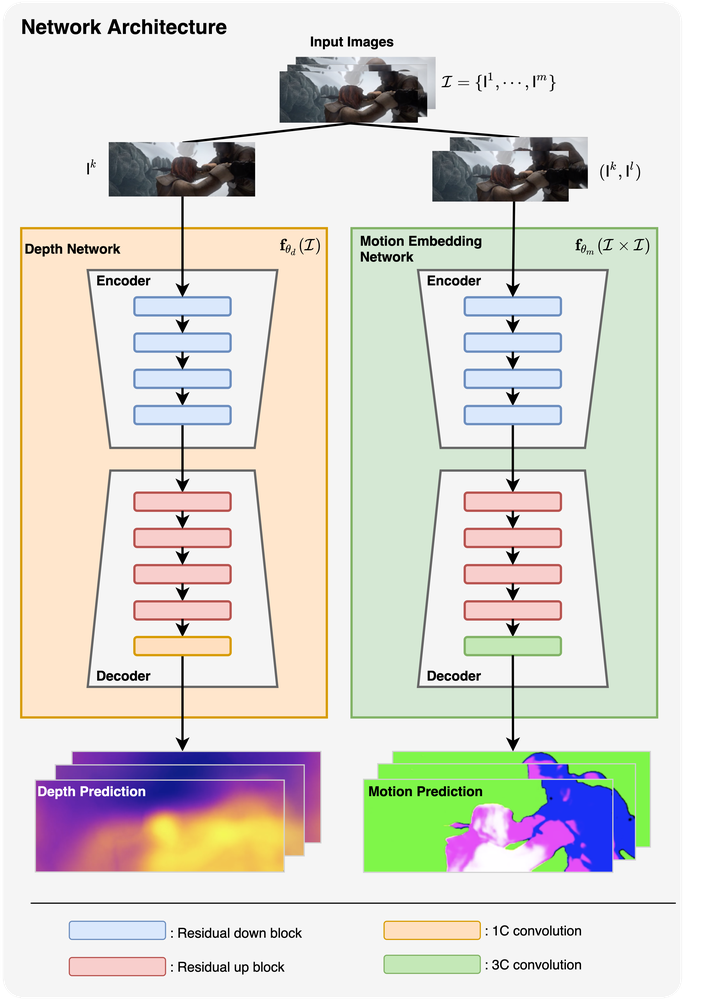}\\[-5pt]
    \caption{\textbf{Network architecture.} The architecture of our model consists of a depth network and a motion-embedding network, both of which incorporates a ResNet-18 based encoder and a decoder. The motion embedding network takes the concatenation of two images as its input whereas the depth network takes a single image as input.} 
    \label{fig:architecture}
\end{figure}

\boldparagraph{Depth Network.}
The depth prediction network, $\mathbf{f}_{\theta_d}(\mathcal{I})$, consists of a ResNet-18 \cite{resnet} based encoder and a decoder. The input to our depth network is a single RGB image, and the output from the network is a single depth map of the same size.  We have local skip connections within the encoder. This encoder has approximately 11M parameters. The depth decoder consists of residual up-blocks. Further details about the architecture of our depth decoder are given in Table~\ref{tab:depthdecoder}. The output from the depth decoder is passed through a sigmoid layer, whose output is then interpreted as scaled inverse depth. Scaled inverse depth is converted to depth using a linear map that converts the values in the range of [0,1] to the interval of $(d_{min}, d_{max})$, where $d_{min}$ is set to 0.1 meters, and $d_{max}$ is set to the maximum available depth value, separately for each different input video, for the test-time training of \textit{MPI Sintel} and \textit{VolumeDeform} datasets.

\begin{table}[!htb]
\scriptsize
\centering
\renewcommand{\tabcolsep}{6.5pt}
\begin{tabular}{ccccccc}
\thickhline 
\multicolumn{7}{c}{\textbf{Depth Decoder}} \\ \hline
\multicolumn{1}{c}{\textbf{layer}} & \multicolumn{1}{c}{\textbf{k}} & \multicolumn{1}{c}{\textbf{s}} & \multicolumn{1}{c}{\textbf{channels}} & \multicolumn{1}{c}{\textbf{res}} & \multicolumn{1}{c}{\textbf{input}} & \multicolumn{1}{c}{\textbf{activation}} \\ \thickhline 
upconv5 & 3  & 1  & 256  & 32   & econv5 & ELU  \\
iconv5  & 3  & 1  & 256  & 16   & upconv5, econv4 & ELU  \\ \hline
upconv4 & 3  & 1  & 128  & 16   & iconv5 & ELU  \\
iconv4  & 3  & 1  & 128  & 8    & upconv4, econv3 & ELU  \\
\hline
upconv3 & 3  & 1  & 64   & 8    & iconv4 & ELU  \\
iconv3  & 3  & 1  & 64   & 4    & upconv3, econv2 & ELU  \\
 \hline
upconv2 & 3  & 1  & 32   & 4    & iconv3 & ELU  \\
iconv2  & 3  & 1  & 32   & 2    & upconv2, econv1 & ELU  \\
\hline
upconv1 & 3  & 1  & 16   & 2    & iconv2 & ELU  \\
iconv1  & 3  & 1  & 16   & 1    & upconv1    & ELU  \\
disp1   & 3  & 1  & 1    & 1    & iconv1 & Sigmoid  \\ \thickhline
\end{tabular}
\vspace{-6pt}
\caption{\textbf{Depth Decoder Architecture.}}
\label{tab:depthdecoder}
\end{table}

\boldparagraph{Motion Embedding Network.}
The motion embedding network has a similar architecture as the depth network. The architecture of motion encoder is again a ResNet18-based encoder; however, different than the depth encoder, the motion encoder takes an input of 6 channels, corresponding to the concatenated channels of the two input images. The motion decoder architecture is provided in Table~\ref{tab:motiondecoder}. The output from the motion decoder is passed through a sigmoid to obtain a map of \textit{motion embeddings}. The architecture of the motion decoder is the same as the depth decoder, except for the last layer \textit{pix1}, where we have a 3-channel output instead of a 1-channel output, obtained as the output from a 3-channel convolutional layer. The number of output channel refers to the size of the pixel-wise motion embeddings. While we have employed motion embedding vectors of size 3, the number of output channels can be arbitrarily changed.

\begin{table}[!htb]
\scriptsize
\centering
\renewcommand{\tabcolsep}{6.5pt}
\begin{tabular}{ccccccc}
\thickhline 
\multicolumn{7}{c}{\textbf{Motion Embedding Decoder}} \\ \hline
\multicolumn{1}{c}{\textbf{layer}} & \multicolumn{1}{c}{\textbf{k}} & \multicolumn{1}{c}{\textbf{s}} & \multicolumn{1}{c}{\textbf{channels}} & \multicolumn{1}{c}{\textbf{res}} & \multicolumn{1}{c}{\textbf{input}} & \multicolumn{1}{c}{\textbf{activation}} \\ \thickhline 
upconv5 & 3  & 1  & 256  & 32   & econv5 & ELU  \\
iconv5  & 3  & 1  & 256  & 16   & upconv5, econv4 & ELU  \\ \hline
upconv4 & 3  & 1  & 128  & 16   & iconv5 & ELU  \\
iconv4  & 3  & 1  & 128  & 8    & upconv4, econv3 & ELU  \\
\hline
upconv3 & 3  & 1  & 64   & 8    & iconv4 & ELU  \\
iconv3  & 3  & 1  & 64   & 4    & upconv3, econv2 & ELU  \\
 \hline
upconv2 & 3  & 1  & 32   & 4    & iconv3 & ELU  \\
iconv2  & 3  & 1  & 32   & 2    & upconv2, econv1 & ELU  \\
\hline
upconv1 & 3  & 1  & 16   & 2    & iconv2 & ELU  \\
iconv1  & 3  & 1  & 16   & 1    & upconv1    & ELU  \\
pix1   & 3  & 1  & 3    & 1    & iconv1 & Sigmoid                                  \\ \thickhline
\end{tabular}
\vspace{-6pt}
\caption{\textbf{Motion Embedding Decoder Architecture.}}
\label{tab:motiondecoder}
\end{table}

\subsection{Training Procedure}

All experiments and case studies in this work were implemented using the publicly available PyTorch 1.4.0 distribution \cite{pytorch} in Python 3.6, using Cuda 9.0.
In all experiments, the ResNet18-based depth encoder and motion-embedding encoder, were initialized with weights from ImageNet \cite{imagenet} pre-training. Adam \cite{adam} optimizer with  $\beta_1= 0.9$,  $\beta_2=0.999$ was used, in combination with a learning rate decay by 0.1 every 15 epochs. As the small volume of \textit{MPI Sintel} and \textit{VolumeDeform} datasets does not yet enable us to train a depth network that can generalize, we have performed test-time training for each image sequence from these two datasets, for each method. In that direction, we trained the models from each baseline method as well as our method by only using a single sequence, and evaluated the depth-reconstruction performance on that sequence. In this test-time training pipeline, we only used the network as a non-linear function to estimate the depth, and used training as a procedure for optimizing the parameters of this network as a way of solving for the depth of each frame over a sequence. In our experiments with \textit{Hamlyn} however, we showed our generalization results, and utilized training/validation/test splits in the usual manner. For training with the \textit{Hamlyn} dataset, we used a subset of 5000 images from the provided training split, and evaluated the method on the provided test split. 

We followed a training schedule consisting of two stages. In the first stage, we jointly trained the motion-embedding network and the depth network. In the second stage, we froze the weights of the motion-embedding network, and continued the training of the depth network. In the latter stage of the training, we applied a $\tau$-offset to the weights. Here, the positive scalar $\tau$ represents the rigidity threshold as previously explained. 

For the sequences from the MPI Sintel dataset we have performed the first stage for 20 epochs, and the second stage for 50 or 60 epochs depending on the sequence. For the sequences from the VolumeDeform datatset, we performed the first stage of training for 10 epochs, and the second stage for 50 epochs. We applied the weight offset during the second stage of this training scheme.

\subsection{Detailed Explanation of Algorithm 1 and Implementation Details}
\setcounter{algorithm}{0}  %
\begin{algorithm}[ht]
\caption{
$[ \mathcal{L}^{kl}_\theta]=$\textbf{computeLossARAP}$(\mathsf{I}^{k}, \mathsf{I}^{l}$)}\vspace{0.5mm}
\begin{algorithmic}[1]

   \State Sample a set of edges $\mathcal{S}^k$ with their vertices $\mathcal{V}^k$.

 \State Estimate  the motion embedding $\Mmat^{kl}=\mathbf{f}_{\theta_m}(\mathsf{I}^{k}, \mathsf{I}^{l})$.
  
  \State Compute $\mathsf{W}^{kl}_\mathcal{S}$ using $\Mmat^{kl}$ and Fig.~(4) for $\mathcal{S}^k$.
  
  \State Establish $(i,j)$ between $(k,l)$ using  $\mathsf{F}^{kl} = \mathbf{f}_{\theta_f}(\mathsf{I}^k, \mathsf{I}^l)$.
  
  \State Start loop $s = k,l$
  \State -- Estimate the depth $\mathsf{\Lambda}^s=\mathbf{f}_{\theta_d}(\mathsf{I}^{s})$
  
 \State -- Reconstruct 3D 
 ${{\mathsf{X}^s}(\mathsf{\Lambda^s})=[\lambda_1^s\mathsf{u}_1^s,\ldots,\lambda_n^s\mathsf{u}_n^s]}$ for $\mathcal{V}^s$.
 
 \State --  Compute the EDM $\mathsf{E}(\mathsf{\Lambda}^s)$ using ${\mathsf{X}^s}(\mathsf{\Lambda^s})$.
 \State End loop
 
 \State Compute loss $\mathcal{L}_\theta(\mathsf{\Lambda}^k, \mathsf{\Lambda}^l, \mathsf{W}_\mathcal{S}^{kl})$ using \eqref{eq:loss-iso-nrsfm}.

\State Return $\mathcal{L}_\theta(\mathsf{\Lambda}^k, \mathsf{\Lambda}^l, \mathsf{W}_\mathcal{S}^{kl})$.
\end{algorithmic}
\algrule
Loss computation between two views for ARAP prior.
\end{algorithm}

In this section we elaborate on the steps of which the Algorithm 1 comprises, and to clarify any modifications to the main algorithm during the training stage.

Please recall our loss formulation:
\begin{equation}
\mathcal{L}_\theta(\mathsf{\Lambda}^k, \mathsf{\Lambda}^l, \mathsf{W}_\mathcal{S}^{kl}) =
 \frac{\norm{\mathsf{W}_\mathcal{S}^{kl}\odot\big(\mathsf{E}(\mathsf{\Lambda}^k)-\mathsf{E}(\mathsf{\Lambda}^l)\big)}_{1,1}}
      {\alpha\norm{\mathsf{W}_\mathcal{S}^{kl}}_{1,1}} \;.
 \label{eq:loss-iso-nrsfm}              %
\end{equation}
First of all, the calculations are computed in their vector forms rather than matrix forms due to computational considerations. In our implementation, we use consecutive images as view pairs, \ie $(k,l)$ is computed as $(k, k+1)$. 

To start the algorithm, a set of edges $\mathcal{S}^k$ with their vertices $\mathcal{V}^k$ are sampled. In that regard, from the set of all points in view $k$, we filter out the points which do not have correspondences in the view $l$. From the remaining points, we sample a set of point pairs for a given sample size. In our experiments we sample 100k point pairs for which both points have correspondences in the corresponding view. 

We use the images $\mathsf{I}^{k}$ and $\mathsf{I}^{l}$ and concatenate them in their channel dimensions. We pass this concatenated tensor through the motion embedding network and obtain a map of motion embeddings, \ie $\mathsf{M}^{kl}=\mathbf{f}_{\theta_m}(\mathsf{I}^{k}, \mathsf{I}^{l})$.

Using the list of sampled point pairs, which denotes the sampled edges in our formulation, we retrieve the motion embedding vectors associated with each point in each pair. We obtain a list of motion embedding pairs, among which we calculate the pairwise distances. For each embedding pair, say $\mathsf{m}_i^{kl}, \mathsf{m}_j^{kl}$, we calculate the associated weight $w_{ij}^{kl}$ via $w_{ij}^{kl} = 1 - tanh(\norm{\mathsf{m}_i^{kl} - \mathsf{m}_j^{kl}})$. After having calculated this value for each pair, we have a list of rigidity scores, which we use for re-weighting our pair-wise rigidity loss. In the second stage of the training, we also apply the previously mentioned $\tau-$offset to the weights in the following form: $\tilde{w_{ij}^{kl}} = \frac{w_{ij}^{kl} + \tau}{1 + \tau}$, followed by the clamping of these values to $[0,1]$. In Fig.~\ref{fig:offset}, we demonstrate the rescaled weight values obtained using different $\tau$ values.

 \begin{figure}[ht]
  \centering
    \includegraphics[width=0.8\textwidth]{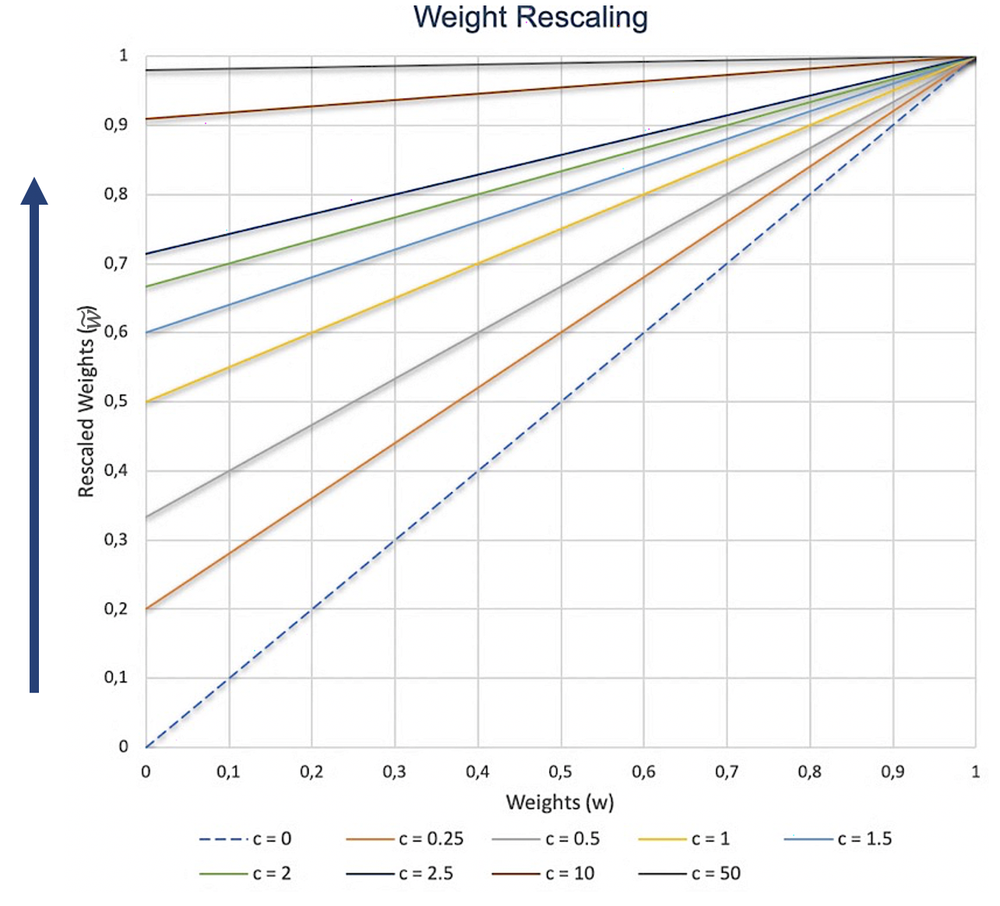}\\[-8pt]
    \caption{\textbf{Weight rescaling with offsets.} For each different offset value $\tau$, the linear rescaling of weights.}
    \label{fig:offset}
\end{figure}

In parallel to the computation of the motion embeddings and rigidity scores, we compute the depth predictions to formulate our loss based on our ARAP formulation. We use the images $\mathsf{I}^{k}$ and $\mathsf{I}^{l}$, and separately pass them through the depth network to obtain depth predictions $\mathsf{\Lambda}^k = \mathbf{f}_{\theta_d}(\mathsf{I}^{k})$ and $\mathsf{\Lambda}^l = \mathbf{f}_{\theta_d}(\mathsf{I}^{l})$. 
 
To compute the pairwise distances, we need to have the corresponding point pair for each pair we sampled. For each point pair $(X_i^k, X_j^k)$ in our list of sampled edges, we compute the corresponding point pair $(X_i^l, X_j^l)$ in view $l$ using the dense correspondences $\mathsf{F}^{kl} = \mathbf{f}_{\theta_f}(\mathsf{I}^k, \mathsf{I}^l)$. This results in another list of point pairs, which are all represented by their homogeneous image coordinates. Using the predicted dense depth maps, we compute the 3D point coordinates for each point as ${{\mathsf{X}^k}(\mathsf{\Lambda^k})=[\lambda_1^k\mathsf{u}_1^k,\ldots,\lambda_n^k\mathsf{u}_n^k]}$ for $\mathcal{V}^k$ and ${{\mathsf{X}^l}(\mathsf{\Lambda^l})=[\lambda_1^l\mathsf{u}_1^l,\ldots,\lambda_n^l\mathsf{u}_n^l]}$ for $\mathcal{V}^l$. 
 
For computational purposes, we compute the EDM for the points in its vector form $\mathsf{e}_\mathcal{S}^k$ and $\mathsf{e}_\mathcal{S}^l$, representing the pairwise distances for the sampled pairs in view $k$, and the corresponding pairs in view $l$. $\mathsf{e}_\mathcal{S}^k$ is a vector whose entries, $e_{\mathcal{S},ij}^k$, represent the squared Euclidean distances between the points within a pair $(i,j)$ such that $(X_i,X_j) \in \mathcal{S}$. The squared Euclidean distances between the corresponding point pair in the $l$-th view are denoted by $\mathsf{e}_{\mathcal{S},ij}^l$. 
Using these two distance vectors, we compute the loss for a given pair of views $(k,l)$, via $\mathcal{L}_\theta(\mathsf{\Lambda}^k, \mathsf{\Lambda}^l, \mathsf{W}_\mathcal{S}^{kl})$ using \eqref{eq:loss-iso-nrsfm}, however in a modified form suiting the vector structure. Similarly, we define $\mathsf{w}_\mathcal{S}^{kl}$, denoting the motion similarity weights in a vector form, where $w_{\mathcal{S},ij}^{kl}$ terms are the non-zero entries from $\mathsf{W}_{\mathcal{S}}^{kl}$, which is how we make use of the sparsity of this weight matrix as we mentioned in the main paper. 

In our implementation, we employ several normalization schemes and regularization terms. Here, we elaborate on these modifications we employ while forming our loss function in implementation. 
First of all, it is important to note that with the objective formulation from~\eqref{eq:loss-iso-nrsfm}, there exists a trivial solution for the minimization problem, which can be attained via predicting zero depth at each point, unless we use the normalization term $\alpha$. During the training, without any sort of depth or distance normalization, the network quickly starts to make constant, zero depth prediction independent of the given input image. With the aim of mitigating this problem, we normalize the pairwise distances via the following formulation: $\tilde{e_{\mathsf{S},ij}^k} = \frac{e_{\mathsf{S},ij}^k}{\sum\limits_{\substack{(i,j):(X_i, X_j)\in S}}{e_{\mathsf{S},ij}^k}}$, and similarly for $\tilde{e_{\mathsf{S},ij}^l}$. Additionally, we compute the normalization factor $\alpha$ from~\eqref{eq:loss-iso-nrsfm} via $\alpha=\norm{\mathsf{e}(\mathsf{\Lambda}^k)+\mathsf{e}(\mathsf{\Lambda}^l)}_{1}$ for numerical stability and for avoiding reconstructions with near-zero depth values. We additionally incorporate a weight-norm regularization term into our training objective, \ie, $\beta\norm{\mathsf{w}_\mathcal{S}^{kl}}_{1}$, to control the maximization of the motion similarity scores (weights),
where $\beta$ is a coefficient determining the contribution of the weight-norm regularization term, L1-norm of the weight vector, to the overall training objective. In our experiments, we use $\lambda=0.01$.

\subsection{Runtimes}
For our experiments with test-time optimization, \ie for Sintel and VolumeDeform (VD), runtimes vary based on the image dimensions and sequence length.
Runtimes are 15-45m for Sintel and 40m-1h30m for VD.
Inference times on Hamlyn are significantly smaller ($<$0.1s per frame) where we report generalization results by first training a model and then directly inferring the depth by a single forward-pass. For Hamlyn, please see the Table~\ref{tab:thresholds}.
\begin{table}[h]
  \centering
  \scriptsize
  \setlength{\tabcolsep}{2.7pt}
  \renewcommand{\arraystretch}{1.1}
  {\begin{tabular}{lrrrrrrr}
    \toprule
    Runtimes      & PackNet& Li & DLH  & MaxRig & MDH & Ours(w/) &Ours(w/o) \\  %
    \hline
   Training & 250m & 980m & - & - & - & 110m & 248m\\ %
    Inference(total) & 7m & 7m & 137m & 790m & 210m & 13m & 13m \\
    \bottomrule
  \end{tabular}}
  \caption {\textbf{Runtime comparison for the Hamlyn dataset.} Runtime values for training and inference across difference methods for the Hamlyn dataset.}
  \label{tab:thresholds}
\end{table}

\subsection{Stopping criteria}
Our stopping criteria are based on the geometrical loss and the number of epochs, and not on the evaluation metrics. We stop once the loss deviations are within a margin. We have a rather high margin for these deviations as the computed loss always depends on the randomly sampled point pairs. For the generalization experiments (Hamlyn), we rely on the validation loss as the stopping criterion.

\subsection{Comparison with Other Methods}
In this section, we elaborate on how we trained our baseline models.
The results from unsupervised monocular pipelines \textit{PackNet} \cite{packnet} and \textit{Li}~\etal~\cite{google-depth}, were obtained by training a model separately for each sequence from \textit{MPI Sintel} and \textit{VolumeDeform}. For the \textit{Hamlyn} dataset, we trained the models using the training split from \textit{Hamlyn}. 
The results from the non-rigid reconstruction models \textit{DLH} \cite{dai2012simple}, \textit{MDH} \cite{probst2018incremental} and \textit{MaxRig} \cite{maxrig} were obtained using the complete test sequence to perform reconstruction, as these methods are not single-view methods and they require a set of images to solve for the depth. It is also important to note that \textit{DLH} \cite{dai2012simple} and \textit{MaxRig} \cite{maxrig}  methods provide sparse reconstructions. For our comparisons, we scaled each depth map prediction by using the median of the points for which a prediction has been made. We evaluated the performance of these models by computing the errors only for these points.

\section{Evaluation Methodology}

\subsection{Evaluation Metrics for Depth Reconstruction}
As unsupervised monocular depth reconstruction has an inherent scale ambiguity, the depth reconstruction is up to an unknown scale factor. Previous approaches have followed an evaluation strategy of rescaling the predicted depth maps prior to evaluation. While there are numerous ways of performing this rescaling, we follow the approach from \cite{zhou} and \cite{digging}. For each view $k$, we use the ground truth depth map $\mathsf{\Lambda}_{gt}^k$, and the depth prediction $\mathsf{\Lambda}^k$, to calculate the scaling factor  $\hat s$ as
\begin{equation}
 \tilde s =\frac{\mathrm{median}(\mathsf{\Lambda}_{gt}^k)}{\mathrm{median}(\mathsf{\Lambda}^k)}.
\end{equation}
We then calculate the rescaled depth map $\tilde{\mathsf{\Lambda}^k}$ as
\begin{equation}
 \tilde{\mathsf{\Lambda}^k} = \tilde s  \mathsf{\Lambda}^k.
\end{equation}
We perform this median depth rescaling step only prior to evaluation. Then, we assess the performance of our depth reconstruction pipeline by using metrics widely employed in current depth reconstruction benchmarks detailed below.

For a rescaled depth map $\tilde{\mathsf{\Lambda}^k}$, and ground-truth depth map $\mathsf{\Lambda}_{gt}^k$, we use the indexing $\tilde{\lambda_{ij}^k}$ and $\lambda_{gt,ij}^k$ to calculate the following error metrics. \\

\boldparagraph{Mean Absolute Relative Error \cite{make3d}.}
\begin{equation}
ARel = \frac{1}{n}\sum_{i,j} \frac{\left| \lambda_{gt,ij}^k - \lambda_{ij}^k\right|}{\lambda_{gt,ij}^k}
\end{equation}

\boldparagraph{Linear Root Mean Square Error (RMSE) \cite{towardsholisticscene}.}
\begin{equation}
RMSE = \sqrt{\frac{1}{n}\sum_{i,j} {\left(\lambda_{gt,ij}^k - \lambda_{ij}^k\right)^2}}
\end{equation}

\boldparagraph{Log-scale Invariant RMSE \cite{eigen}.}
\begin{equation}
RMSE_{log} = \sqrt{\frac{1}{n}\sum_{i,j} \left(\log{\lambda_{gt,ij}^k} - \log{\lambda_{ij}^k}\right)^2}
\end{equation}\\

\boldparagraph{Accuracy under a threshold \cite{pullingthings}.}
\begin{equation}
\delta < \mathcal{T} =  \max{ \left(\frac{\lambda_{ij}^k}{\lambda_{gt,ij}^k},\frac{\lambda_{gt,ij}^k}{\lambda_{ij}^k}\right)}
\end{equation}
where $\mathcal{T}$ is a threshold, set to $\mathcal{T}=1.25$, $\mathcal{T}=1.25^2$, $\mathcal{T}=1.25^3$, for three different evaluation settings, as suggested in \cite{pullingthings}.

For the \textit{MPI Sintel} dataset, we evaluate the performance only where the ground-truth depth is smaller than 50 meters, except for the \textit{mountain\_1} sequence where we evaluate the performance where the ground-truth depth is smaller than 500 meters. For the other datasets, we evaluate the depth for every pixel where the ground-truth is available.

\subsection{Motion segmentation}
For the evaluation of the motion embedding pipeline, we use the \textit{Intersection-over-Union (IoU)} metric, also known as \textit{Jaccard index}, which represents the overlap percentage between the predicted labels and the ground truth labels.

\boldparagraph{Overall pixel accuracy.}
We calculate the overall pixel accuracy as the ratio between the number of correctly classified pixels and the total number of pixels in the image, as in

\begin{equation}
\mathrm{ACC} = \frac{\# \mathrm{correctly\;classified\;pixels}}{\# \mathrm{total\; pixels}} 
\end{equation}

\boldparagraph{Intersection-over-Union (IoU).}
\begin{equation}
\mathrm{IoU} = \frac{\left|GT \cap Pred\right|}{\left|GT \cup Pred \right|} = \frac{TP}{TP + FN + FP}
\end{equation}

\section{Dataset Details}
\subsection{Datasets}
In this section, we provide an overview of the datasets we use in this work. We perform our experiments and evaluate our method using three datasets, namely \textit{MPI Sintel} \cite{sintel}, \textit{VolumeDeform} \cite{volumedeform}, and \textit{Hamlyn} \cite{hamlyn} datasets. Due to the small volume of \textit{MPI Sintel} and \textit{VolumeDeform} datasets, we perform test-time training for the sequences from these datasets, where we train a separate model for each sequence. Table~\ref{tab:datasets} presents a comparison between some of the properties of \textit{MPI Sintel} and \textit{VolumeDeform} datasets.

\begin{table}[ht]
\centering
\small
\setlength{\tabcolsep}{4pt}
\begin{tabular}{lcc}
\thickhline 
\textbf{Dataset} & \textbf{MPI Sintel~\cite{sintel}} & \textbf{VolumeDeform~\cite{volumedeform}} \\ \hline
\# Training frames  & 1064  & 4806  \\ 
\# Training scenes & 25  & 8   \\
Resolution & 1024 $\times$ 436  & 640 $\times$ 480   \\ \hline 
Camera intrinsics & \yes  & \yes  \\
Camera extrinsics & \yes  & \yes  \\\hline 
Disparity/Depth & \yes  & \yes  \\
Optical flow & \yes  & \noo \\ \hline
Motion segmentation & \yes  & \noo    \\ \hline
\end{tabular}
\vspace{-6pt}
\caption {\textbf{Comparison between MPI Sintel \cite{sintel} and VolumeDeform \cite{volumedeform} datasets used in this work.}}
\label{tab:datasets}
\end{table}

\boldparagraph{MPI Sintel~\cite{sintel}:}
\textit{MPI Sintel} is a synthetic dataset that has been developed to address the limitations of optical flow benchmarks by providing naturalistic video sequences with motion blur, non-rigid motion and long-range motion \cite{sintel}. The dataset is based on modifications on the open source animated short film \textit{Sintel}, with the purpose of obtaining image sequences that could be useful for optical flow evaluation \cite{sintel}. The \textit{MPI Sintel} dataset contains information about properties of image sequences such as optical flow, depth, and motion boundaries \cite{sintel}. The image sequences are available in two categories, \textit{clean}, and \textit{final} passes, which are rendered with different difficulty levels \cite{sintel}. The main reasoning behind our preference for using this dataset is the availability of perfect ground truth optical flow for all of the sequences. This enables us to mitigate problems that could be caused by noisy correspondences, and consequently provides us with an experimental setting that is isolated from possible inaccuracies in optical flow estimations. Additionally, having ground truth depth information for each sequence creates an ideal setting for evaluating our depth estimation algorithm for dynamic and deforming scenes. It also captures a wide variety of scenes with non-rigid deformations and illumination changes, as well as image sequences of varying depth scales \cite{sintel}. 

For our experiments, taking into consideration the maximum available depth of the sequences, we selected a set of 14 sequences from the final-pass category of the training set of \textit{MPI Sintel}, namely \textit{alley\_1, alley\_2, ambush\_2, ambush\_4, ambush\_5, ambush\_6, cave\_2, cave\_4, market\_2, market\_5, mountain\_1, shaman\_3, sleeping\_1, sleeping\_2}.

\boldparagraph{VolumeDeform~\cite{volumedeform}:}
In order to perform further experiments on non-rigid deformations, we use the \textit{VolumeDeform} dataset, which consists of 8 separate video sequences, namely \textit{adventcalendar, boxing, hoodie, minion, shirt, sunflower, umbrella, upperbody}. These 30 Hz video streams are captured by an RGB-D sensor, recording at a resolution of $480 \times 640$ pixels \cite{volumedeform}.

\begin{figure}[ht]
  \centering
    \includegraphics[width=1\textwidth]{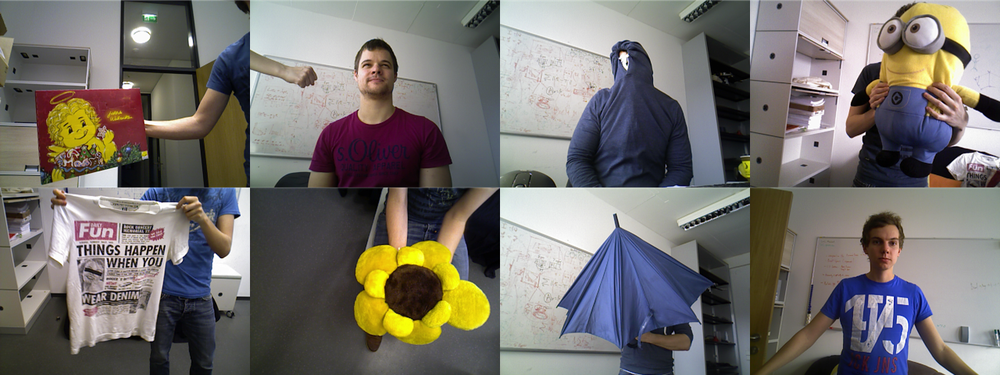}\\[-6pt]
    \caption{\textbf{Sample images from the \textit{VolumeDeform} dataset.} Example frames from VolumeDeform dataset. From left to right, top to bottom, the sequences: "adventcalendar", "boxing", "hoodie", "minion", "shirt" "sunflower", "umbrella", "upperbody".}
\end{figure}

\boldparagraph{Hamlyn Centre Laparoscopic Video Dataset~\cite{hamlyn}:}
In order to evaluate the generalization capability of our method, we use the \textit{Hamlyn Centre Laparoscopic Video Dataset}, which consists of approximately 40000 pairs of rectified stereo images collected from a partial nephrectomy surgery \cite{hamlyn}.  As we do not have access to the ground-truth depth for the \textit{Hamlyn} dataset, we use OpenSFM \cite{opensfm-1, opensfm-2} to obtain the depth through stereo reconstruction using the calibrated stereo pairs in the dataset.

\subsection{Obtaining Optical Flow}
In our work, we require having dense correspondences for our image pairs, as previously discussed. Although we are provided with ground-truth optical flow for the \textit{MPI Sintel} dataset, we do not have ground-truth correspondences for the \textit{VolumeDeform} and \textit{Hamlyn} datasets. For the \textit{VolumeDeform} dataset, we use \textit{RAFT} \cite{raft}, and for the \textit{Hamlyn} dataset we use \textit{DDFlow} \cite{ddflow} to predict dense optical flow. 

\begin{figure*}[!htb]
  \centering
  \multirow{1}{*}[10pt]{{\textbf{\quad\quad Input\quad\quad\quad\quad\quad\quad\quad\quad\quad\quad GT\quad\quad\quad\quad\quad\quad\quad\quad\quad\quad Li~\etal~\cite{google-depth} \quad\quad\quad\quad\quad\quad Ours w/ motion}}}
    \includegraphics[width=\textwidth]{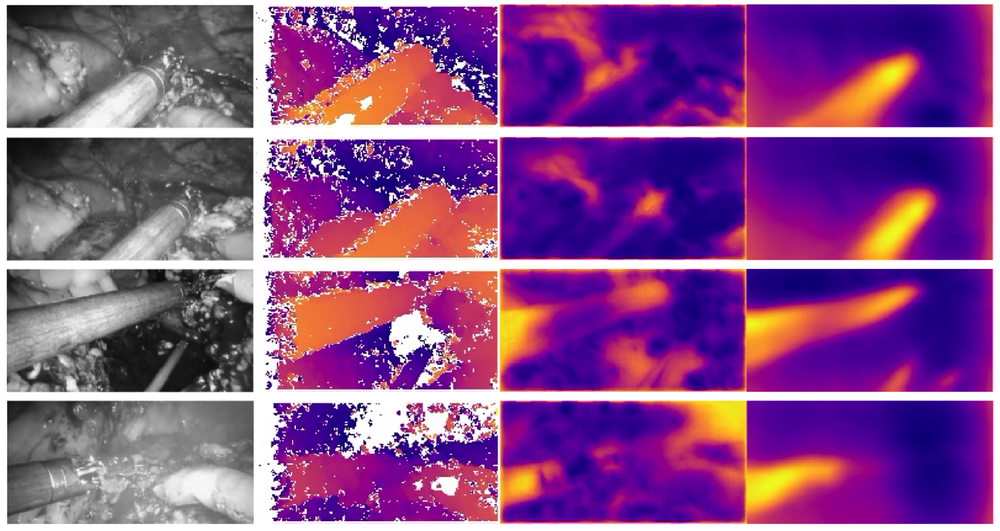}\\[-10pt]
    \caption{\textbf{Depth reconstruction for Hamlyn dataset~\cite{hamlyn}.} A comparison between the depth obtained from stereo-reconstruction (obtained using OpenSFM, labeled as GT), depth prediction results from Li~\etal~\cite{google-depth} and our method with motion embeddings are presented. Please note that the input images are represented in grayscale here, whereas they have been used as RGB images in our pipeline.}
    \label{fig:hamlynfull}
\end{figure*}

\subsection{Data pre-processing: } For the \textit{MPI Sintel} dataset, original images are of size $436 \times 1024$. The images were center-cropped to obtain input images of size $384 \times 1024$. For the \textit{VolumeDeform} dataset, the images are of size $480 \times 640$, and for the \textit{Hamlyn} dataset the image size is $192 \times 384$. For the \textit{VolumeDeform} and \textit{Hamlyn} datasets, we do not crop or resize the images, and we directly use the original image size. For all three of the datasets, we perform a standardization of the images based on the mean and variance of the images used during the ImageNet~\cite{imagenet} pre-training.

\section{Further Results and Analysis}

\boldparagraph{Depth Reconstruction.}
In Fig.~\ref{fig:hamlynfull}, we provided additional qualitative results for depth reconstruction. Additionally, we performed an analysis of the error distribution for the depth predictions from our model to better understand the error values. As we suspected, points with small ground truth depth values and for scenes with slightly wrong scale corrections often result in larger Mean Absolute Relative Error (ARE) values. In that direction we wanted to compare the ARE values for our model with the ARE values obtained with depth values that are uniform randomly sampled. For the randomly sampled depth values, we obtained ARE values that are larger by orders of magnitude. As shown in Fig.~\ref{fig:histograms}, which showcases the error distributions, our results are substantially better than uniform random depth predictions Figure~\ref{fig:histograms}. 

\begin{figure}[t]
  \centering
  \scriptsize
  \setlength{\tabcolsep}{1pt}
  \begin{tabular}{cc}  
     \includegraphics[width=0.54\columnwidth]{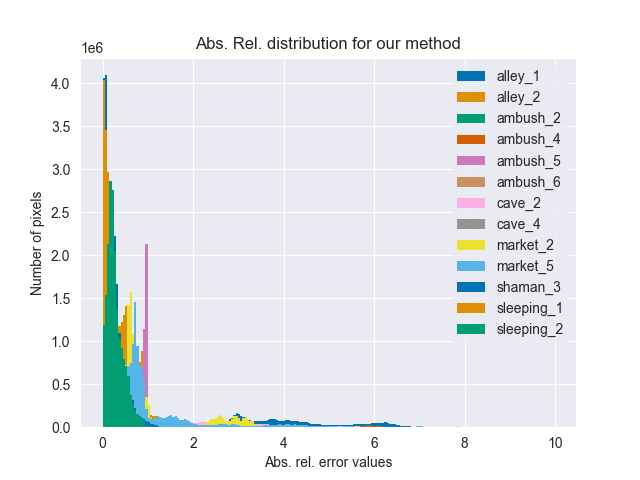} &
     \includegraphics[width=0.54\columnwidth]{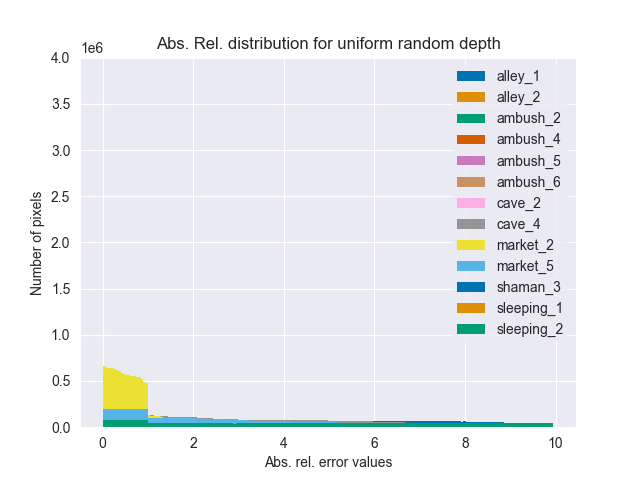} \\[-3pt]
     ours w/ motion, depth & uniform random depth \\[-9pt]
  \end{tabular}
  \caption{\textbf{\#Pixels vs. abs. rel. error histograms for the Sintel dataset.} 
  \textbf{Left:} depth predictions from ours w/ motion. \textbf{Right:} uniform random depth sampling. Please note the axis scales. Our errors are significantly better than random. }
  \label{fig:histograms}
  \vspace{2pt}
\end{figure}

\boldparagraph{Motion Segmentation.}
We evaluated the performance of our per-pixel motion embeddings. As the \textit{ground-truth} motion-embeddings are not available, we perform an evaluation based on the moving object segmentation task. We perform the motion segmentation evaluation only for the \textit{MPI Sintel} dataset, for which the ground-truth per-pixel motion segmentation maps are available \cite{gt-sintel-motsegm}.  In Table~\ref{tab:threshold}, we compare the performance of our model with two recent works \cite{motsegm-baseline} and \cite{loquercio}, and demonstrate our competitive results for the motion segmentation task in Fig.~\ref{fig:motsegmfull}. For obtaining motion segmentation results from our embeddings, we separate the static part from the dynamic parts of the scene by performing a thresholding based on the distance between per-pixel motion embeddings and an embedding vector representing the static cluster.  First, we obtain the channel-wise median of the embeddings corresponding to the pixels on the image borders across the whole image sequence. We assign this median embedding as the center of the cluster corresponding to the static background. By calculating the Euclidean distance between each of the embeddings and this cluster center, we obtain \textit{staticity} scores. By determining a distance threshold, pixels corresponding to the moving segments are separated. We report the scores with a distance threshold of $\mathcal{T} = 0.1$, in Table~\ref{tab:threshold}.

\begin{table*}[ht]
\footnotesize
\centering
\renewcommand{\tabcolsep}{5.6pt}
\begin{tabular}{ll|cccccccccccccc}
\thickhline 
  & & alley\_1 &  alley\_2 &  ambush\_2 & ambush\_4 & ambush\_5 & ambush\_6 & cave\_4& market\_2 &   market\_5  & shaman\_3  \\ \thickhline

Taniai~\etal~\cite{motsegm-baseline} & \textbf{ACC} $\uparrow$  & 0.942 & 0.978  & \textbf{0.983}  & 0.780  & \textbf{0.974} & 0.470 & \textbf{0.852} & \textbf{0.912} & \textbf{0.855}  & \textbf{0.974}     \\ %

&\textbf{IoU} $\uparrow$ & 0.856 & \textbf{0.738}  & \textbf{0.963}  & 0.531  &  \textbf{0.947} & 0.264 & \textbf{0.717} & \textbf{0.723}  & \textbf{0.713}   &  \textbf{0.829}   \\ %

Yang~\etal~\cite{loquercio} & \textbf{ACC} $\uparrow$  & 0.874 & 0.931  & 0.629  & 0.755  & 0.713 & 0.697 & 0.622 & 0.864  & 0.739 &0.732     \\ %

&\textbf{IoU} $\uparrow$ & 0.670 & 0.598  & 0.421  & 0.534  &  0.521 & 0.489 & 0.426 &0.612 & 0.517  & 0.406    \\ %

Ours& \textbf{ACC} $\uparrow$    & \textbf{0.947} & \textbf{0.980} & 0.971   & \textbf{0.862}  & 0.972  & \textbf{0.942}   & 0.797 & 0.881 & 0.842   & 0.921  \\

&\textbf{IoU} $\uparrow$ & \textbf{0.875} & 0.545  & 0.944  & \textbf{0.817}  &  0.945 & \textbf{0.881}   & 0.649 & 0.627 & 0.569 & 0.461  \\
\thickhline 
\end{tabular}
\vspace{-6pt}
\caption {\textbf{Quantitative analysis of motion segmentation performance on MPI Sintel~\cite{sintel}, (for the thresholding based motion segmentation method).}  Motion segmentation performance of our model was evaluated on MPI Sintel using Overall Pixel Accuracy (ACC) and Intersection-over-Union (IoU) metrics.}
\label{tab:threshold}
\end{table*}

\begin{figure}[H]
  \centering
    \includegraphics[width=1.02\columnwidth]{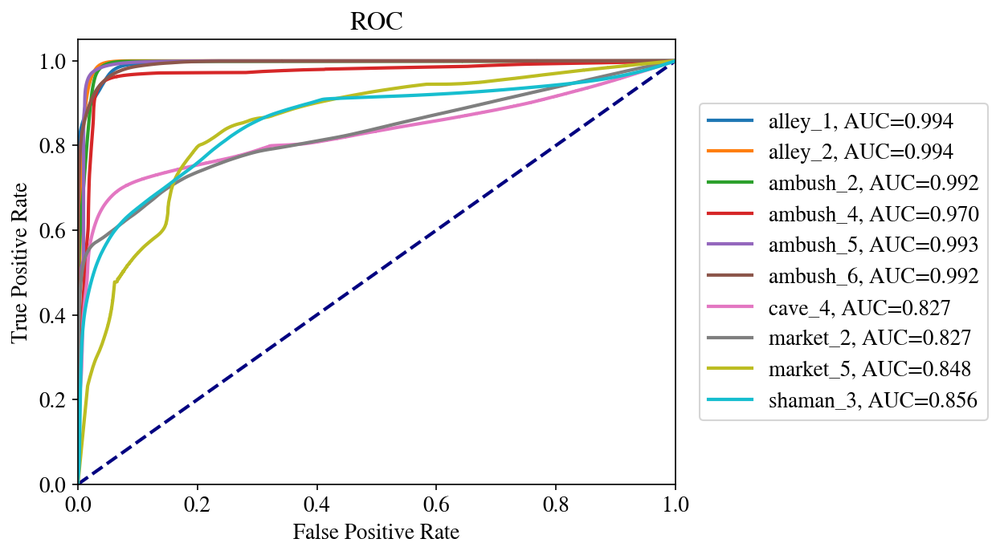}\\[-6pt]
    \caption{\textbf{ROC for motion segmentation performance on MPI Sintel \cite{sintel}.} ROC curves for each sequence are demonstrated.}
    \label{fig:roc}
\end{figure}

\begin{figure}[h]
  \centering
    \includegraphics[width=\textwidth]{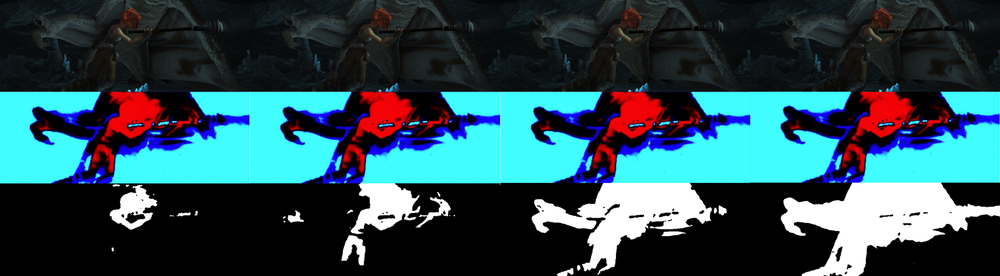}\\[-6pt]
    \caption{\textbf{Moving object segmentation results for different distance thresholds.} As we decrease the distance threshold (from left to right), we can see the variation of the the motion segmentation results. This image demonstrates that our embeddings do not simply provide a motion segmentation, but also provide information about different motion clusters.}
    \label{fig:thresholds}
\end{figure}

\begin{figure*}[!h]
  \centering
  \multirow{1}{*}[10pt]{{\textbf{\; 
  Image\qquad\qquad\qquad\qquad 
  GT\qquad\qquad\quad
  Taniai~\etal~\cite{motsegm-baseline} \qquad\quad Yang~\etal~\cite{loquercio} \qquad\qquad\quad
  Ours}}}
    \includegraphics[width=\textwidth,trim={0 0 0 1cm},clip]{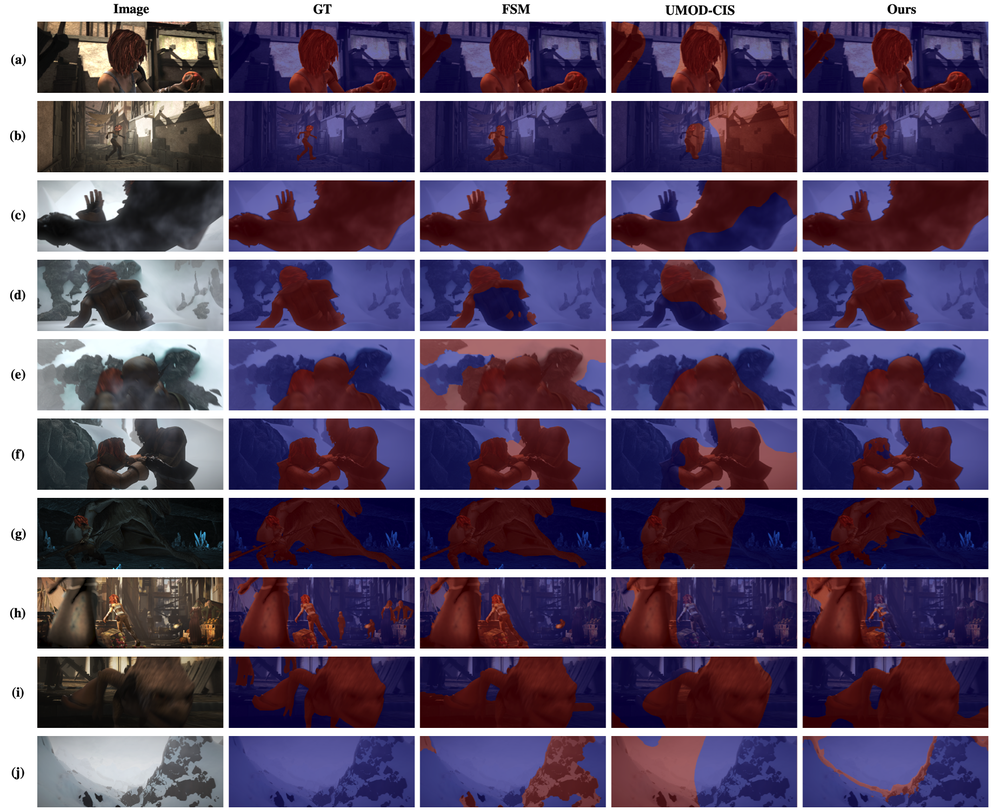}\\[-6pt]
    \caption{\textbf{Motion segmentation performance.} We compare our results with the ground-truth and results from two other methods. Our method gives competitive results despite motion segmentation being only an auxiliary output in our pipeline. }
    \label{fig:motsegmfull}
\end{figure*}

We elaborate more on what this thresholding means and what happens to the segmentation map as we vary the threshold. With this purpose, we have plotted the ROC curve for motion segmentation performance in Fig.~\ref{fig:roc}. In Fig.~\ref{fig:thresholds}, we demonstrate how this distance threshold affects the motion segmentation results and how different motion clusters start being classified as dynamic as we vary the threshold. With this observation, we want to highlight that our motion embeddings provide not just a separation between the static and the dynamic parts, bu they also represent different motions in the image pair.

\boldparagraph{\newline Limitations.}
\textit{Dependency on optical flow:}
One of the important aspects of our method is that our predictions are dependent on the quality of the correspondences, as our geometric loss assumes known correspondences across views. As previously explained, we use three different ways to obtain dense correspondences: \textbf{(1)} obtained in an unsupervised manner (for the Hamlyn dataset), \textbf{(2)} in a supervised manner (for the VolumeDeform dataset), and \textbf{(3)} ground truth (for the MPI Sintel dataset).  Note that despite the fact that the quality of optical flow maps obtained from these three methods is different, meaningful depth reconstruction results we obtained demonstrate the robustness of our method with respect to the quality of provided optical flow maps. However, in addition to this remark, we also would like to highlight that with increasing correspondence noise, we expect to observe a degradation in the depth prediction quality. This is an issue that needs to be paid attention, particularly in highly complex and dynamic real-life scenarios, where the optical flow estimation obtained from pre-trained networks could be erroneous.

\end{document}